\documentclass{article}
\usepackage[algo2e,ruled]{algorithm2e}
\usepackage{microtype}
\usepackage{graphicx}
\usepackage{subcaption}
\usepackage{booktabs} 

\usepackage{hyperref}

\usepackage[accepted]{icml2026}

\usepackage{amsmath}
\usepackage{amssymb}
\usepackage{mathtools}
\usepackage{amsthm}

\usepackage[table]{xcolor}
\usepackage{multirow}
\usepackage{booktabs}
\usepackage{bm}
\usepackage{pifont}
\usepackage[export]{adjustbox}
\usepackage{wrapfig}

\definecolor{theyellow}{RGB}{255,240,193}
\colorlet{myyellow}{theyellow!60}
\colorlet{lightyellow}{theyellow!26}
\definecolor{DarkBlue}{RGB}{0,0,128}
\definecolor{LightBlue}{RGB}{64,101,149}
\definecolor{cvprblue}{rgb}{0.21,0.49,0.74}
\definecolor{thegray}{RGB}{230,235,235}
\colorlet{mygray}{thegray!70}
\definecolor{theblue}{RGB}{193,230,255}
\colorlet{lightblue}{theblue!36}
\colorlet{myblue}{theblue!70}
\definecolor{DeepRed}{rgb}{0.8, 0, 0}
\usepackage{threeparttable}
\newcolumntype{I}{!{\vrule width 0.8pt}}
\makeatletter
\newcommand{\thickhline}{%
    \noalign {\ifnum 0=`}\fi \hrule height 0.5pt
    \futurelet \reserved@a \@xhline
}
\makeatother
\definecolor{RedOrange}{rgb}{1.0, 0.27, 0.0}
\newcommand{\myred}[1]{$_{\color{RedOrange} \uparrow #1}$}
\newcommand{\pub}[1]{{\color{gray}{\scriptsize{[{#1}]}}}}
\newcommand{\eg}{\textit{e.g.}}
\newcommand{\ie}{\textit{i.e.}}

\usepackage[capitalize,noabbrev]{cleveref}

\theoremstyle{plain}
\newtheorem{theorem}{Theorem}[section]

\theoremstyle{definition}

\newtheorem{assumption}{Assumption}[section]
\theoremstyle{remark}

\usepackage[textsize=tiny]{todonotes}

\icmltitlerunning{FedHPro: Federated Hyper-Prototype Learning via Gradient Matching}
\begin{document}

\twocolumn[
  \icmltitle{FedHPro: Federated Hyper-Prototype Learning via Gradient Matching}



  \begin{icmlauthorlist}
    \icmlauthor{Huan Wang}{uow,smu}
    \icmlauthor{Jun Shen}{uow}
    \icmlauthor{Haoran Li}{mou}
    \icmlauthor{Zhenyu Yang}{mq}
    \icmlauthor{Jun Yan}{uow}
    \icmlauthor{Ousman Manjang}{bit}
    \icmlauthor{Yanlong Zhai}{bit}
    \icmlauthor{Di Wu}{ltu}
    \icmlauthor{Guansong Pang}{smu}
  \end{icmlauthorlist}

  \icmlaffiliation{uow}{School of Computing and Information Technology, University of Wollongong, Wollongong, Australia}
  \icmlaffiliation{smu}{School of Computing and Information Systems, Singapore Management University, Singapore, Singapore}
  \icmlaffiliation{mou}{Monash University, Australia}
  \icmlaffiliation{mq}{Macquarie University, Australia}
  \icmlaffiliation{bit}{Beijing Institute of Technology, China}
  \icmlaffiliation{ltu}{La Trobe University, Australia}

  \icmlcorrespondingauthor{Jun Shen}{jshen@uow.edu.au}
  \icmlcorrespondingauthor{Guansong Pang}{gspang@smu.edu.sg}
  \icmlkeywords{Machine Learning, ICML}
  \vskip 0.3in
]



\printAffiliationsAndNotice{}  

\begin{abstract}
Federated Learning (FL) enables collaborative training of distributed clients while protecting privacy. 
To enhance generalization capability in FL, prototype-based FL is in the spotlight, since shared global prototypes offer semantic anchors for aligning client-specific local prototypes. 
However, existing methods update global prototypes at the prototype-level via averaging local prototypes or refining global anchors, which often leads to semantic drift across clients and subsequently yields a misaligned global signal. 
To alleviate this issue, we introduce \textbf{hyper-prototypes}, defined by a set of learnable global class-wise prototypes to preserve underlying semantic knowledge across clients. The hyper-prototypes are optimized via gradient matching to align with class-relevant characteristics distilled directly from clients' real samples, rather than prototype-level descriptors. 
We further propose \textbf{FedHPro}, a Federated Hyper-Prototype Learning framework, to leverage hyper-prototypes to promote inter-class separability via mutual-contrastive learning with client-specific margin, while encouraging intra-class uniformity through a consistency penalty. 
Comprehensive experiments under diverse heterogeneous scenarios confirm that 1) hyper-prototypes produce a more semantically consistent global signal, and 2) FedHPro achieves state-of-the-art performance on several benchmark datasets. Code is available at \href{https://github.com/mala-lab/FedHPro}{https://github.com/mala-lab/FedHPro}.
\end{abstract}

\section{Introduction}
Federated Learning (FL) has emerged as a paradigm to collaboratively train a global model across distributed clients based on heterogeneous data, without leaking their privacy \cite{mcmahan2017communication,li2020federated}. 
However, one major challenge for FL is the data heterogeneity issue \cite{li2022federated}, making the model convergence slow and unstable \cite{Li2020On,karimireddy2020scaffold}. 
In the practical FL, each client's data is collected from highly diverse sources according to their own preference space. The non-independent and identically distributed (non-IID) data lead to inconsistency in local objectives among clients \cite{xu2025federated,li2020fedprox}. Meanwhile, the global empirical direction is further distracted by aggregating these skewed local models, resulting in suboptimal performance.

\begin{figure}[t]
    \centering
    \includegraphics[width=0.83\linewidth]{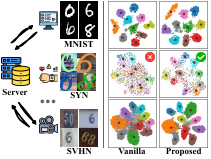}
    \caption{\textbf{Illustration of heterogeneous FL with domain skew.} The \textbf{Vanilla} column visualizes the feature distribution of a standard prototype-based FedProto \cite{tan2022fedproto}, showing failures in hard domains like SYN. 
    In contrast, the \textbf{Proposed} column shows our approach achieves a larger inter-class distance and a smaller intra-class distance in such domains. Due to its simple aggregation, the prototypes in Vanilla (\protect\includegraphics[scale=0.18,valign=c]{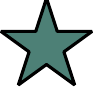}) fail to reliably align with class centers, whereas our Proposed (\protect\includegraphics[scale=0.18,valign=c]{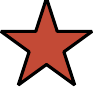}) yields better separability.}
    \label{fig:fig1}
\end{figure}

To mitigate data heterogeneity, a mainstream of subsequent studies focuses on introducing various global signals to facilitate FL training \cite{li2020fedprox,mendieta2022local,luo2021no}. 
Considering practicality and decent generalization, prototype-based FL methods \cite{tan2022fedproto,dai2023tackling,huang2023rethinking} have emerged as a novel FL paradigm that transfers global prototypes among clients to tackle data heterogeneity. 
They average the features with the same class to obtain local prototypes and upload them to the server to produce global prototypes. Afterward, the global prototypes are employed as signals to regularize local updating \cite{tan2022fedproto}. By aligning with global prototypes, they boost the generalizability \cite{zhou2025fedsa}.

However, existing prototype-based FL methods directly collect the local prototypes from biased data distributions of clients, which unavoidably introduces semantic inconsistency \cite{zhou2023fedfa,zhou2025fedsa}. 
This is evident in the feature distribution: FedProto \cite{tan2022fedproto} shows clear separation between different categories in simple domains (\eg, MNIST in \cref{fig:fig1}), but exhibits ambiguous clustering in hard domains (\eg, SYN in \cref{fig:fig1}), particularly around samples near boundaries. 
Recent prototype-based FL methods, such as FedSA \cite{zhou2025fedsa}, further improve global prototypes by learning or refining semantic anchors. 
However, their updates are still primarily performed at the prototype-level by client-specific representations, which may inherit semantic biases and thus fail to provide a consistently aligned global signal. 
We argue that a common weakness exists in these methods: \textit{the class-wise semantic margins between global prototypes could be largely weakened by client-induced representation biases}. 
An intuitive solution is to preserve sufficient local data with diverse distributions to learn a uniform representation space, mirroring centralized training. However, this is fundamentally incompatible with the privacy-preserving FL paradigm.

\begin{figure}[t]
    \centering
    \begin{minipage}[t]{0.7\linewidth}
        \centering
        \includegraphics[width=1.0\columnwidth]{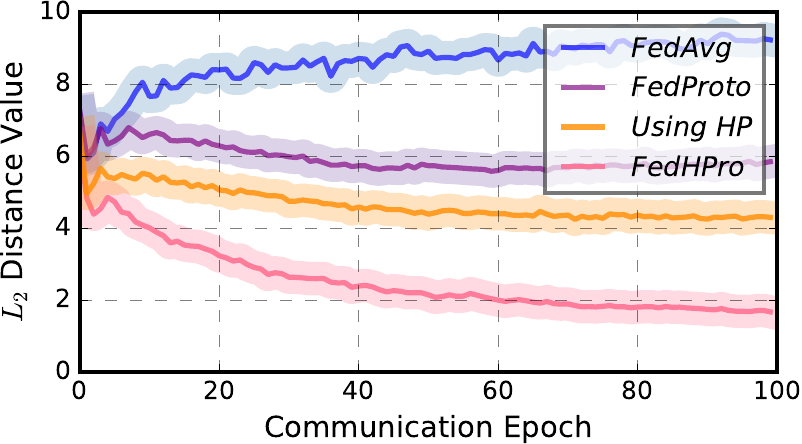}
    \end{minipage}
    \vspace{-0.16mm}
    \\ 
    \vspace{1mm}
    \begin{minipage}[t]{0.88\linewidth}
        \centering
        \resizebox{\linewidth}{!}{
        \renewcommand\arraystretch{1.11}
        \setlength{\arrayrulewidth}{0.3mm}
            \begin{tabular}{l||cccc II cc}
            \hline\thickhline
            \rowcolor{mygray} \textsc{\textbf{Methods}} 
            & MNIST & USPS & SVHN & SYN & AVG $\pmb{\uparrow}$ & $\triangle$ \\
            \hline\hline
            FedAvg & 96.68 & 90.43 & 76.35 & 51.85 & 78.82 & -- \\
            $\underbrace{\text{FedProto}}$ & 97.40 & 91.89 & 79.22 & 53.15 & 80.41 & +1.59 \\
            \rowcolor{lightyellow} \textit{Using HP} & 97.48 & 92.24 & 80.90 & 56.39 & \textcolor{blue}{\textbf{81.75}} & \textcolor{blue}{\textbf{+2.93}} \\
            \hline
            \rowcolor{myyellow} FedHPro & 98.52 & 93.13 & 84.95 & 62.59 & \textcolor{DeepRed}{\textbf{84.80}} & \textcolor{DeepRed}{\textbf{+5.98}} \\
            \hline\thickhline
            \end{tabular}
        }
    \end{minipage}
    \caption{\textbf{Top}: The $L_{2}$ distance of centralized prototypes calculated by centralized training using all clients' samples to global prototypes from FedAvg \cite{mcmahan2017communication} and FedProto \cite{tan2022fedproto}, and from our hyper-prototypes. 
    \textbf{Bottom}: The corresponding accuracy on the Digits \cite{peng2019moment} dataset, `\textit{Using HP}' means replacing the global prototypes with our hyper-prototypes in the FedProto's loss function.} 
    \label{fig:fig2}
\end{figure}

This dilemma inspires our \textbf{core idea}: if the FL model could learn a representation space using all clients' samples, the obtained prototypes would exhibit better semantic consistency. In other words, we expect to optimize global prototypes over all real samples, instead of adopting the skewed representations from clients. 
Inspired by this insight, we propose \textbf{hyper-prototypes}, a set of learnable global class-wise prototypes optimized via gradient matching to explicitly align their gradients with the average gradients of real samples from clients. 
Such gradient alignment, by effectively simulating the optimization trajectories of real samples, enables our hyper-prototypes to distill class-relevant semantic characteristics directly from real samples, rather than prototype-level descriptors. 
This can be observed by a comparison of prototype generation methods in \cref{fig:fig2}: hyper-prototypes show a significantly smaller $L_{2}$ distance to centralized prototypes (calculated under centralized training with all clients' samples) compared to the prototypes from FedAvg \cite{mcmahan2017communication}, and FedProto \cite{tan2022fedproto}. 
The advantages of our hyper-prototypes are further confirmed by the results, where our hyper-prototypes can be used to replace the prototypes in existing methods (\eg, FedProto) to achieve largely enhanced performance.

Building on hyper-prototypes, we further propose \textbf{FedHPro}, a Federated Hyper-Prototype Learning framework for heterogeneous FL. FedHPro comprises two key components: 
1) Hyper-Prototype Contrastive Learning (HPCL), which exploits the hyper-prototypes to construct mutual-contrastive learning \cite{chen2020simple,he2020momentum,gui2024survey} with a client-specific margin. 
HPCL first specifies the margin by measuring pair-wise distances between local prototypes to sharpen decision boundary. 
By pulling each embedding closer to its assigned hyper-prototypes and pushing it away from others, HPCL promotes inter-class separability while preserving semantics. 
2) Hyper-Prototype Alignment Learning (HPAL), where we align each embedding with its representative hyper-prototypes by penalizing deviations at the feature level, thereby improving intra-class uniformity across clients. 
Our main contributions are listed as:

\ding{182} We introduce hyper-prototypes (\cref{sec:hp}), a set of learnable global prototypes, optimized via gradient matching to align with class-relevant characteristics distilled from real samples, promoting semantic consistency across clients.

\ding{183} We then propose FedHPro (\cref{sec:FedHPro}), which leverages the hyper-prototypes to construct two complementary components: HPCL to enhance inter-class separability, while HPAL to impose intra-class uniformity.

\ding{184} Extensive experiments across different FL scenarios demonstrate the effectiveness of FedHPro, and a series of ablative studies validate the advantages of hyper-prototypes.

\section{Federated Learning with Hyper-Prototypes} \label{sec:hp}
\subsection{Background and Motivation} \label{sec:background}
\paragraph{Preliminaries.} 
We focus on a standard heterogeneous FL setting \cite{li2020fedprox,mcmahan2017communication} with $K$ participating clients holding data partition $\{\mathcal{D}_{1}, \mathcal{D}_{2}, \ldots, \mathcal{D}_{K}\}$. For the $k$-th client, its private data is $\mathcal{D}_{k}=\{x_{i}, y_{i}\}|_{i=1}^{n_{k}}$ and $y_{i} \in \{1, 2, \ldots, \mathbb{C}\}$, where $n_{k}$ denotes the scale of the client's dataset and $\mathbb{C}$ represents the number of classes. 
Let $n_{k}^{c}$ be the number of samples with label $c$ at the $k$-th client, so $\mathcal{D}_{k}^{c}=\{(x_{i}, y_{i}) \in \mathcal{D}_{k} | y_{i}=c\}$ represents the subset of data samples and $|\mathcal{D}_{k}^{c}|=n_{k}^{c}$, then the samples of label $c$ from all clients can be defined as $\mathcal{D}^{c}=\sum_{k=1}^{K}\mathcal{D}_{k}^{c}$ and $|\mathcal{D}^{c}|=n^{c}=\sum_{k=1}^{K}n_{k}^{c}$. 
Formally, the empirical objective of general FL can be formulated as follows:
\begin{equation} \label{eq:eq1} 
    w^{*} = \arg\min_{w} \mathcal{L}(w) = \sum_{k=1}^{K} \frac{n_{k}}{N} \mathcal{L}_{k}(w;\mathcal{D}_{k}),
\end{equation}
where $\mathcal{L}_{k}$ is the local objective for the $k$-th client, $w$ and $N=\sum_{k=1}^{K} n_{k}$ denote the shareable model and the total number of samples from all clients, respectively. 

Besides, participants agree on sharing a model with the same architecture. We regard the model with two modules: 
1) a feature extractor $f:x \rightarrow z$, mapping each input sample $x$ to a $d$-dim feature vector $z=f(x)$; 
2) a specific classifier $h:z \rightarrow \ell$, encoding the feature vector $z$ to a $\mathbb{C}$-dim logit output $\ell=h(z)$. 
The parameters of the $k$-th local model are denoted as $w_{k}=\{f_{k}, h_{k}\}$, and we further define the aggregated global model's parameters as $w^{*}=\{f^{*}, h^{*}\}$.

\paragraph{Motivation.} The class-wise prototype $\mathbf{p}_{k}^{c} \in \mathbb{R}^{d}$ is calculated by the mean vector of the samples' features belonging to the same class $c$ on the $k$-th client, expressed as:
\begin{equation} \label{eq:eq2} 
    \mathbf{p}_{k}^{c} = \frac{1}{n_{k}^{c}} \sum_{(x_{i}, y_{i}) \in \mathcal{D}_{k}^{c}} f_{k}(x_{i}), 
    \;\; \mathbf{p}_{k}^{c} \in \mathbb{R}^{d},
\end{equation}
where $\mathcal{D}_{k}^{c}$ denotes the set of samples annotated with class $c$ on client $k$. The prototypes are typical of respective semantic knowledge, capturing useful class-relevant information on individual clients. 
We further calculate the $k$-th client's local prototypes, which are formulated as:
\begin{equation} \label{eq:eq3} 
    \mathcal{P}_{k} = \textbf{[} \mathbf{p}_{k}^{1}, \ldots, \mathbf{p}_{k}^{c}, \ldots, \mathbf{p}_{k}^{\mathbb{C}} \textbf{]} 
    \in \mathbb{R}^{\mathbb{C} \times d},
\end{equation}
where $\mathbf{p}_{k}^{c}$ is the class-wise prototype defined in \cref{eq:eq2}, and $d$ indicates the vector dimension of the prototype. $\mathcal{P}_{k}$ is the group of prototypes for all categories on the $k$-th client.

Considering the large scale and inherent heterogeneity of participating clients in FL, simply using all clients' prototypes to form global signals is often insufficient. 
Several studies \cite{tan2022fedproto,tan2022fedpcl} adopt a straightforward solution to obtain global prototypes by directly averaging all local prototypes. We follow \cite{tan2022fedproto} to calculate the global prototypes $\mathbf{p}^{c}$ of class $c$ across clients:
\begin{equation} \label{eq:eq4} 
\begin{aligned}
    & \mathbf{p}^{c} = \frac{1}{|\mathcal{A}^{c}|} \sum_{k \in \mathcal{A}^{c}} \frac{n_{k}^{c}}{n^{c}} 
    \mathbf{p}_{k}^{c} \in \mathbb{R}^{d}, \\
    & \bm{\mathbb{P}} = \textbf{[} \mathbf{p}^{1}, \ldots, \mathbf{p}^{c}, \ldots, \mathbf{p}^{\mathbb{C}} \textbf{]} 
    \in \mathbb{R}^{\mathbb{C} \times d},
\end{aligned}
\end{equation}
where $n^{c}$ is the number of samples belonging to class $c$ over all clients, and $\mathcal{A}^{c}$ means the set of clients that have class $c$. Then, the global prototypes $\mathbb{P}$ are dispatched as global signals to regularize local training \cite{tan2022fedpcl}.

However, as mentioned above (\cref{fig:fig1} and \cref{fig:fig2}), we observe that averaging local prototypes or refining global anchors may not capture desired information as expected. 
There are two notable problems: 
\ding{182} the single global prototype $\mathbf{p}^{c}$ is insufficient to describe the semantic information of class $c$ from different clients with biased preferences, and \ding{183} due to the inherent heterogeneity across clients, directly collecting local prototypes to form global prototypes would raise the same dilemma as the global drift, causing global prototypes to align with skewed representations.

\subsection{Approximating Centralized Prototypes with Hyper-Prototypes via Gradient Matching} \label{sec:approx}
Driven by these limitations, we introduce hyper-prototypes, which 1) leverage a set of learnable class-wise prototypes to adequately capture semantically meaningful knowledge from each client on the server, and 2) optimize them through gradient matching to simulate optimization trajectories of real samples from clients (\ie, approximating the prototypes that are calculated directly using all clients' samples).

For the $k$-th client, given the dataset $\mathcal{D}_{k}=\{x_{i}, y_{i}\}|_{i=1}^{n_{k}}$, we compute the average gradients $\mathbf{g}_{k}^{c}$ of the class $c$ as:
\begin{equation} \label{eq:eq5} 
    \mathbf{g}_{k}^{c} = \frac{1}{n_{k}^{c}} \sum_{(x_{i}, y_{i}) \in \mathcal{D}_{k}^{c}} 
    \nabla_{z_{i}} \mathcal{L}_{k}(x_{i}, y_{i}) \in \mathbb{R}^{d},
\end{equation}
where $z_{i} = f_{k}(x_{i})$ and the instance-wise loss is written as $\mathcal{L}_{k}(x_{i}, y_{i})$, which can be instantiated by cross-entropy loss $\nabla_{z_{i}} \mathcal{L}_{k}(x_{i}, y_{i})=(\frac{\partial h_{k}}{\partial z_{i}})^{\top} \nabla_{h_{k}} \mathcal{L}_{CE}(h_{k}(z_{i}), y_{i})$, and $\mathbf{g}_{k}^{c}$ represents the gradients of the prototype $\mathbf{p}_{k}^{c}$ on the client $k$. 
Then, we can further obtain the gradients of all prototypes on the client $k$ as $\mathbf{g}_{k} = \{\mathbf{g}_{k}^{1}, \ldots, \mathbf{g}_{k}^{c}, \ldots, \mathbf{g}_{k}^{\mathbb{C}}\}$.

After the server receives the gradients from participating clients, we first aggregate these gradients for each class $c$ by averaging over all participating clients:
\begin{equation} \label{eq:eq6} 
\begin{aligned}
    & \mathbf{g}^{c} = \frac{1}{|\mathcal{A}^{c}|} \sum_{k \in \mathcal{A}^{c}} \mathbf{g}_{k}^{c} \in \mathbb{R}^{d}, \\
    & \mathcal{G} = \textbf{[} \mathbf{g}^{1}, \ldots, \mathbf{g}^{c}, \ldots, \mathbf{g}^{\mathbb{C}} \textbf{]} 
    \in \mathbb{R}^{\mathbb{C} \times d},
\end{aligned}
\end{equation}
where $\mathcal{A}^{c}$ denotes the set of clients that hold class $c$. 
We initialize a set of learnable vectors $\{\mathbf{s}_{i}^{c}\}|_{i=1}^{|\mathcal{I}|} \in \mathbb{R}^{|\mathcal{I}| \times d}$ to simulate the hyper-prototypes of class $c$. 
Then, the global model calculates gradients of hyper-prototypes via a virtual loss function $\mathcal{L}_{vir}$, which can be formulated as:
\begin{equation} \label{eq:eq7} 
\begin{aligned}
    & \mathbf{g}_{\mathtt{HP}}^{c} = \frac{1}{|\mathcal{I}|} \sum_{i=1}^{|\mathcal{I}|} 
    \nabla_{\mathbf{s}_{i}^{c}} \mathcal{L}_{vir}(h^{*}(\mathbf{s}_{i}^{c}), y_{vir}) \in \mathbb{R}^{d}, \\
    & \mathcal{G}_{\mathtt{HP}} = \textbf{[} \mathbf{g}_{\mathtt{HP}}^{1}, \ldots, \mathbf{g}_{\mathtt{HP}}^{c}, \ldots, \mathbf{g}_{\mathtt{HP}}^{\mathbb{C}} \textbf{]} \in \mathbb{R}^{\mathbb{C} \times d},
\end{aligned}
\end{equation}
where we set $\mathcal{L}_{vir}$ as the cross-entropy loss and $y_{vir}$ as the corresponding class label $c$, and $h^{*}$ denotes the classifier of the global model $w^{*}$. 
Based on $\mathbf{g}^{c} \in \mathcal{G}$ from real samples, we maximize the agreement between $\mathbf{g}^{c}$ and $\mathbf{g}_{\mathtt{HP}}^{c}$ by measuring their dissimilarity to optimize hyper-prototypes:
\begin{equation} \label{eq:eq8}
\mathcal{L}_{GM}(\mathbf{g}^{c}, \mathbf{g}_{\mathtt{HP}}^{c}) = 1 - \frac{\mathbf{g}^{c} \cdot \mathbf{g}_{\mathtt{HP}}^{c}}{\| \mathbf{g}^{c} \|_{2} \times \| \mathbf{g}_{\mathtt{HP}}^{c} \|_{2}},
\end{equation}
where $\| \cdot \|_{2}$ means $L_{2}$ normalization. 
Through $\mathcal{L}_{GM}$ loss in \cref{eq:eq8}, after optimizing the hyper-prototypes on $M$ rounds, we can obtain the updated hyper-prototypes $\mathcal{S}_{M}$:
\begin{equation} \label{eq:eq9}
    \mathcal{S}_{M} = \textbf{[} \{\mathbf{s}_{i}^{1}\}|_{i=1}^{|\mathcal{I}|}, \ldots, \{\mathbf{s}_{i}^{c}\}|_{i=1}^{|\mathcal{I}|}, \ldots, \{\mathbf{s}_{i}^{\mathbb{C}}\}|_{i=1}^{|\mathcal{I}|} \textbf{]},
\end{equation}
where $\mathcal{S}_{M}$ denotes optimizing our hyper-prototypes over $M$ rounds, with each class $c$'s hyper-prototypes set as $\mathcal{S}_{M}^{c}=\{\mathbf{s}_{i}^{c}\}|_{i=1}^{|\mathcal{I}|} \in \mathbb{R}^{|\mathcal{I}| \times d}$ and $\mathcal{S}_{M} \in \mathbb{R}^{\mathbb{C} \times |\mathcal{I}| \times d}$. 
We also define $\mathcal{H}_{M}^{c} = \frac{1}{|\mathcal{I}|} \sum_{i=1}^{|\mathcal{I}|} \mathbf{s}_{i}^{c} \in \mathbb{R}^{d}$ as the averaged hyper-prototypes and $M$ as the iteration rounds. Note that the optimized $\mathcal{S}_{M}$ serves as the initialization for the next round of training.

\begin{figure}[t]
    \centering
    \includegraphics[width=0.98\linewidth]{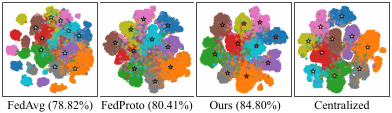}
    \caption{\textbf{Visualization for the representation space of different prototypes} on Digits \cite{peng2019moment}. 
    Each color (\protect\includegraphics[scale=0.15,valign=c]{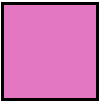}$\;$\protect\includegraphics[scale=0.15,valign=c]{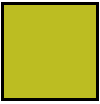}) indicates one class, and each shape (\protect\includegraphics[scale=0.15,valign=c]{fig/fang.pdf}$\;$\protect\includegraphics[scale=0.16,valign=c]{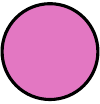}) denotes one domain. 
    Centralized is trained on all clients' samples, as an upper-bound reference for prototype quality. 
    The global prototypes of FedAvg and FedProto fail to describe diverse domain information, while our hyper-prototypes promote better semantic consistency across multiple domains. Please zoom in to view details.}
    \label{fig:fig3}
\end{figure}

\begin{figure}[t]
    \centering
    \includegraphics[width=0.75\linewidth]{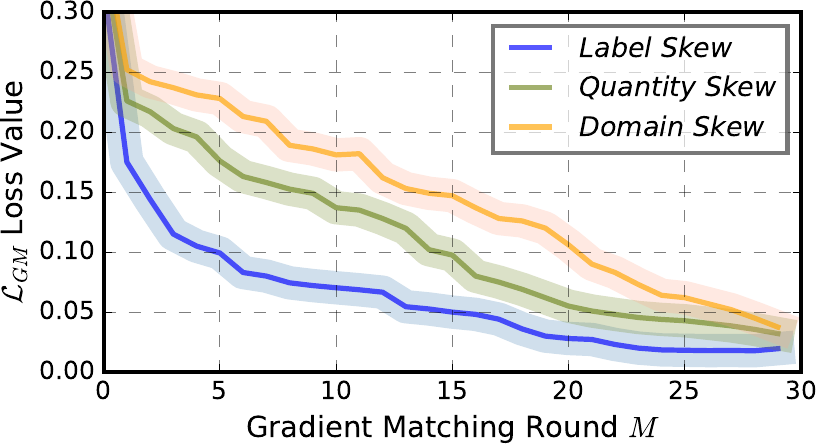}
    \caption{\textbf{Loss trend of $\mathcal{L}_{GM}$} (\cref{eq:eq8}) under different FL scenarios: CIFAR10 \cite{cifar2009krizhevsky} with $\mathtt{NID1}_{0.5}$ (label skew), CIFAR10-LT \cite{cifar2009krizhevsky} with $\rho=50$ (quantity skew), Digits \cite{peng2019moment} (domain skew).}
    \label{fig:fig4}
\end{figure}

\subsection{Hyper-Prototypes vs. Prototypes}
We further display the differences between the global prototypes and hyper-prototypes. 
As visualized in \cref{fig:fig3} by t-SNE \cite{tsne2008}, global prototypes obtained from FedAvg \cite{mcmahan2017communication} or FedProto \cite{tan2022fedproto} inherently present limited class-wise information and show a representation space skewed toward the potentially dominant domains. Our hyper-prototypes instead achieve larger inter-class separations and tighter intra-class compactness, even across multiple domains.

Then, as shown in \cref{fig:fig4}, we further analyze the effectiveness of $\mathcal{L}_{GM}$ in \cref{eq:eq8}, which depicts its loss dynamics under various heterogeneous scenarios. 
It can be observed that the differences in terms of gradients decrease gradually after a few rounds, which means that optimizing $\mathcal{L}_{GM}$ successfully makes the gradients $\mathbf{g}_{\mathtt{HP}}^{c}$ of our hyper-prototypes close to the $\mathbf{g}^{c}$ of the real samples, indicating a good approximation to the centralized prototypes.

Moreover, our hyper-prototypes can serve as a plug-and-play component to replace the conventional prototypes in existing methods to enhance their performance. As previously shown in \cref{fig:fig2}, the hyper-prototypes can be easily incorporated with the existing prototype-based FL methods to yield improvements by replacing the global prototypes, and more empirical results are provided in \cref{tab:tab3}.

\section{The Proposed FedHPro} \label{sec:FedHPro}
After obtaining the hyper-prototypes $\mathcal{S}_{M} \in \mathbb{R}^{\mathbb{C} \times |\mathcal{I}| \times d}$, we further propose FedHPro, as shown in \cref{fig:fig5}, consists of two complementary modules: Hyper-Prototype Contrastive Learning (HPCL in \cref{sec:sec41}) and Hyper-Prototype Alignment Learning (HPAL in \cref{sec:sec42}). 
We also provide a convergence analysis of FedHPro in \cref{sec:sec43}.

\begin{figure*}[t!]
    \centering
    \includegraphics[width=0.96\linewidth]{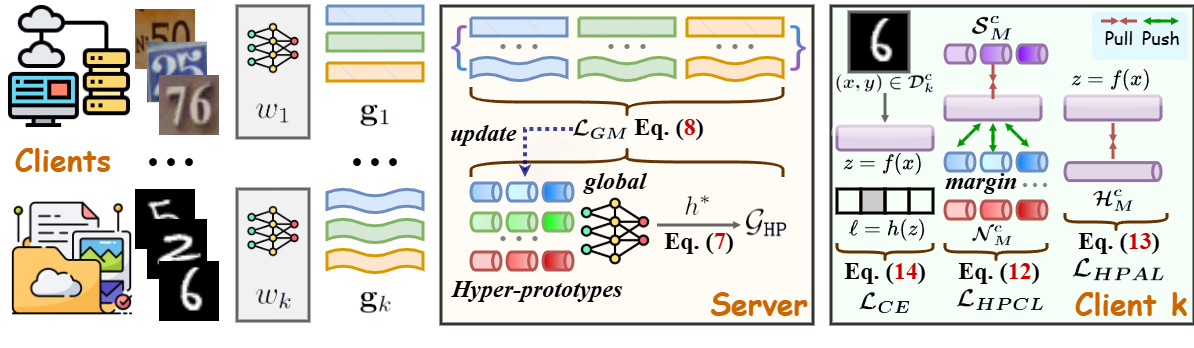}
    \caption{\textbf{Framework illustration} of Federated Hyper-Prototype Learning (FedHPro). The clients upload the gradients $\{\mathbf{g}_{1}, ..., \mathbf{g}_{k}\}$ (\cref{eq:eq5}, \protect\includegraphics[scale=0.22,valign=c]{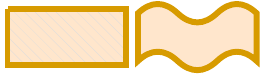}) to the server. Based on these gradients from local clients, we leverage a set of learnable units (\protect\includegraphics[scale=0.22,valign=c]{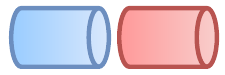}) to simulate hyper-prototypes, capturing class-relevant semantic properties from real samples via gradient matching ($\mathcal{L}_{GM}$ in \cref{eq:eq8}) to enhance generalizability. Then, by incorporating the optimized hyper-prototypes ($\mathcal{S}_{M}$ in \cref{eq:eq9}), we construct Hyper-Prototype Contrastive Learning (HPCL in \cref{sec:sec41}) with client-specific margin to promote inter-class separability. Besides, we design Hyper-Prototype Alignment Learning (HPAL in \cref{sec:sec42}) to impose intra-class uniformity by smoothly penalizing deviations at the feature-level.}
    \label{fig:fig5}
\end{figure*}

\subsection{Hyper-Prototype Contrastive Learning} \label{sec:sec41}
To maximally benefit local learning, inspired by the success of contrastive learning \cite{he2020momentum,gui2024survey}, we deem that a well-generalizable model should not only maintain a clear decision boundary across different categories, but also exhibit invariance among diverse samples with the identical semantics. 
Based on the hyper-prototypes $\mathcal{S}_{M}$ from the server, we devise Hyper-Prototype Contrastive Learning (HPCL) with client-specific margin to facilitate inter-class separability while preserving semantic meaning. 
Specifically, for $(x_{i}, y_{i}) \in \mathcal{D}_{k}^{c}$, we feed it into the feature extractor $f_{k}$ to get the embedding $z_{i}=f_{k}(x_{i})$. Then, we calculate a margin among local prototypes for client $k$:
\begin{equation} \label{eq:eq10}
    d_{k} = \frac{1}{(\mathbb{C}-1)^{2}} \sum_{c_{1}} \sum_{c_{2} \neq c_{1}} D_{L_{2}}(\mathbf{p}_{k}^{c_{1}}, \mathbf{p}_{k}^{c_{2}}),
\end{equation}
where $c_{1}, c_{2} \in \{1,...,\mathbb{C}\}$, $D_{L_{2}}(a, b)$ means the Euclidean distance between $a$ and $b$. 
The client-specific margin $d_{k}$ adaptively guides each sample's embedding toward a more compact cluster center. 
On this basis, we aim to enforce $z_{i}$ to be similar to respective $\mathcal{S}_{M}^{c}$ and dissimilar to the hyper-prototypes of other classes $\mathcal{N}_{M}^{c} = \{\mathcal{S}_{M} \setminus \mathcal{S}_{M}^{c}\}$. 
We define the similarity between $z_{i} \in \mathbb{R}^{d}$ and $\mathcal{S}_{M}^{c} \in \mathbb{R}^{|\mathcal{I}| \times d}$ as:
\begin{equation} \label{eq:eq11}
    s(z_{i}, \mathcal{S}_{M}^{c}) = \frac{1}{|\mathcal{I}|} \sum_{j=1}^{|\mathcal{I}|} \frac{z_{i} \cdot \mathbf{s}_{j}^{c}}{\|z_{i}\|_{2} \times \|\mathbf{s}_{j}^{c}\|_{2}}.
\end{equation}
Next, we introduce the following objective term to quantify and regulate local model training, formulated as:
\begin{equation} \label{eq:eq12}
    \mathcal{L}_{HPCL} = \log(1 + \frac{\sum_{\mathcal{S}_{M}^{j} \in \mathcal{N}_{M}^{c}} 
    \exp((s(z_{i}, \mathcal{S}_{M}^{j})+d_{k}) / \tau)}{\exp(s(z_{i}, \mathcal{S}_{M}^{c}) / \tau)}),
\end{equation}
where $\tau$ denotes a temperature parameter to control the representation strength \cite{chen2020simple}. 
By minimizing the $\mathcal{L}_{HPCL}$ in \cref{eq:eq12}, the client's local model $w_{k}$ brings each sample's embedding $z_{i}$ closer to associated positive pair $\mathcal{S}_{M}^{c}$ ($\Rightarrow$ compactness) and away from other negative pairs $\mathcal{N}_{M}^{c}$ ($\Rightarrow$ exclusiveness), promoting inter-class separability in local model training.

\subsection{Hyper-Prototype Alignment Learning} \label{sec:sec42}
Despite the local model pulling sample's embedding toward the corresponding hyper-prototypes by minimizing $\mathcal{L}_{HPCL}$, representation inconsistency may still arise due to the intrinsic heterogeneity across clients, which fails to offer a stable convergence target. 
Therefore, we design Hyper-Prototype Alignment Learning (HPAL) to impose intra-class uniformity among clients by smoothly penalizing deviations at the feature-level. 
Given $(x_{i}, y_{i}) \in \mathcal{D}_{k}^{c}$ on the client $k$, we introduce a smooth regularization penalty term to pull the sample's embedding $z_{i}=f_{k}(x_{i})$ closer to the averaged hyper-prototypes $\mathcal{H}_{M}^{c} = \frac{1}{|\mathcal{I}|} \sum_{i=1}^{|\mathcal{I}|} \mathbf{s}_{i}^{c} \in \mathbb{R}^{d}$. 
We minimize the discrepancy in a dimension-wise manner, with the loss defined as follows (where $q$ indexes the dimension):
\begin{equation} \label{eq:eq13}
    {\small 
        \mathcal{L}_{HPAL} = \sum_{q=1}^{d} 
        \begin{cases}
            \frac{1}{2}{(z_{i(q)}-\mathcal{H}_{M(q)}^{c})^2}; \; \text{if } |z_{i(q)}-\mathcal{H}_{M(q)}^{c}| \le 1,\\
            |z_{i(q)}-\mathcal{H}_{M(q)}^{c}| - \frac{1}{2}, \; \text{otherwise}.
        \end{cases}
    }
\end{equation}
We aim to smoothly align each embedding with its corresponding averaged hyper-prototypes, thereby imposing intra-class uniformity across clients. Besides, we utilize the sample's logit output $\ell_{i}=h_{k}(z_{i})$ with original label $y_{i}$ to build the standard cross-entropy loss term as:
\begin{equation} \label{eq:eq14}
    \mathcal{L}_{CE} = -\mathbf{1}_{y_{i}} \log(\delta(h_{k}(z_{i}))),
\end{equation}
where $\delta$ means softmax function and $\ell_{i}=h_{k}(z_{i}) \in \mathbb{R}^{\mathbb{C}}$. 
Finally, we carry out the following objective for training:
\begin{equation} \label{eq:eq15}
    \mathcal{L} = \mathcal{L}_{CE} + \mathcal{L}_{HPCL} + \mathcal{L}_{HPAL}.
\end{equation}
In each communication round, local model $w_{k}$ is supervised by $\mathcal{L}$ (\cref{eq:eq15}), then the server collects the gradients from clients to optimize hyper-prototypes and aggregates local models to produce the global model. 
The algorithm of the proposed FedHPro is provided in \cref{alg:ours}.

\subsection{Convergence Analysis of FedHPro} \label{sec:sec43}
We prove the convergence of FedHPro in the non-convex case. 
For the non-convex and $L_{1}$-Lipschitz smooth objective function $\mathcal{L}$, there exists a constant $B > 0$ such that $\mathbb{E}_{\xi}[\|\nabla \mathcal{L}\|_{2}] \leq B$ for any sample $\xi$. 
We denote $\mathcal{L}$ in \cref{eq:eq15} as $\mathcal{L}_{r}$ at round $r$, and all used assumptions are similar to general works \cite{Li2020On,li2020fedprox,ye2023feddisco}.
\begin{theorem}[Non-convex convergence rate of FedHPro] \label{thm:ours}
  Let the communication round $r$ from $0$ to $R-1$, given any $\boldsymbol{\varepsilon} > 0$, FedHPro will converge when,
  \begin{equation} \label{eq:eq16}
    {\small 
        R > \frac{2}{E\eta} \cdot \frac{\mathcal{L}_{0} - \min(\mathcal{L}^{*})}{2\boldsymbol{\varepsilon} - L_{1}\eta(\boldsymbol{\varepsilon} + \sigma^{2}) - 2L_{2}B(\mathbb{C}-1)},
    }
  \end{equation}
  where $\min(\mathcal{L}^{*})$ denotes the optimal solution of $\mathcal{L}$.
\end{theorem}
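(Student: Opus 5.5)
The plan is the standard FedProto-style two-part analysis: bound the expected decrease of $\mathcal{L}$ within a round of local SGD, bound the jump in $\mathcal{L}$ when the server aggregates and re-optimizes the hyper-prototypes, and then telescope over rounds. I will use the standard assumptions: $L_{1}$-smoothness of $\mathcal{L}$; unbiased stochastic gradients with variance $\sigma^{2}$; the bound $\mathbb{E}\|\nabla\mathcal{L}\|_{2}\le B$ stated before the theorem; and $L_{2}$-Lipschitz continuity of the embedding map $f$ with respect to the parameters. The quantity being driven below $\boldsymbol{\varepsilon}$ is $\frac{1}{RE}\sum_{r,e}\mathbb{E}\|\nabla\mathcal{L}_{rE+e}\|^{2}$.

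\textbf{Step 1 (local descent).} Consider the iterates $w_{rE+e}$ within round $r$ for $e=0,\dots,E-1$, updated by $w_{t+1}=w_t-\eta g_t$. Smoothness gives
\begin{equation*}
\mathcal{L}_{t+1}\le \mathcal{L}_t-\eta\langle\nabla\mathcal{L}_t,g_t\rangle+\tfrac{L_{1}\eta^{2}}{2}\|g_t\|^{2}.
\end{equation*}
Taking expectations and using $\mathbb{E}\|g_t\|^{2}\le\|\nabla\mathcal{L}_t\|^{2}+\sigma^{2}$ yields
\begin{equation*}
\mathbb{E}\mathcal{L}_{t+1}\le\mathcal{L}_t-\bigl(\eta-\tfrac{L_{1}\eta^{2}}{2}\bigr)\|\nabla\mathcal{L}_t\|^{2}+\tfrac{L_{1}\eta^{2}\sigma^{2}}{2}.
\end{equation*}
Summing over the $E$ local steps gives
\begin{equation*}
\mathbb{E}\mathcal{L}_{(r+1)E-1}\le\mathcal{L}_{rE}-\bigl(\eta-\tfrac{L_{1}\eta^{2}}{2}\bigr)\sum_{e}\|\nabla\mathcal{L}_{rE+e}\|^{2}+\tfrac{L_{1}E\eta^{2}\sigma^{2}}{2}.
\end{equation*}

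\textbf{Step 2 (aggregation and hyper-prototype update).} This is the main obstacle. At the end of the round the model is averaged, and the hyper-prototypes $\mathcal{S}_{M}$ and averaged hyper-prototypes $\mathcal{H}_{M}^{c}$ are replaced via gradient matching (\cref{eq:eq8}). The aim is to show
\begin{equation*}
\mathbb{E}\mathcal{L}_{(r+1)E}\le\mathcal{L}_{(r+1)E-1}+\eta L_{2}B(\mathbb{C}-1)E.
\end{equation*}
Model averaging does not increase the loss in expectation, by convexity of the averaging step together with the usual FedProto-style assumption. The change therefore comes only from the regularizers $\mathcal{L}_{HPCL}$ and $\mathcal{L}_{HPAL}$. $\mathcal{L}_{HPAL}$ is a Huber loss, so it is $1$-Lipschitz per coordinate and its change is controlled by the shift of $\mathcal{H}^{c}$. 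For $\mathcal{L}_{HPCL}$ (\cref{eq:eq12}), the log-sum-exp is Lipschitz in its inputs, and its change is controlled by the $\mathbb{C}-1$ negative-class similarity terms.

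Each hyper-prototype shift is tied to how far the embeddings move during a round. The embedding map is $L_{2}$-Lipschitz in the parameters, and the parameters move by at most $\eta\sum_{e}\|g\|$, which is at most $\eta EB$ in expectation. Summing over the $\mathbb{C}-1$ negatives produces the $(\mathbb{C}-1)$ factor. If gradient matching cannot be shown to keep the hyper-prototypes within this drift, I will instead state that control as an assumption: a hyper-prototype shift bounded by the embedding drift. That mirrors what FedProto does with its prototype-Lipschitz assumption.

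\textbf{Step 3 (telescoping).} Combine Steps 1 and 2, sum over $r=0,\dots,R-1$, and use $\mathcal{L}_{RE}\ge\min(\mathcal{L}^{*})$. Dividing by $RE\eta$ bounds the averaged squared gradient norm by
\begin{equation*}
\frac{2(\mathcal{L}_{0}-\min\mathcal{L}^{*})}{RE\eta(2-L_{1}\eta)}+\frac{L_{1}\eta\sigma^{2}+2L_{2}B(\mathbb{C}-1)}{2-L_{1}\eta}.
\end{equation*}
Require this to be less than $\boldsymbol{\varepsilon}$ and multiply through by $2-L_{1}\eta$. Moving the $L_{1}\eta\boldsymbol{\varepsilon}$ term to the left-hand side gives the denominator $2\boldsymbol{\varepsilon}-L_{1}\eta(\boldsymbol{\varepsilon}+\sigma^{2})-2L_{2}B(\mathbb{C}-1)$. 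Solving for $R$ then yields \cref{eq:eq16}, under the implicit requirement that this denominator is positive, i.e.\ $\eta$ small and $\boldsymbol{\varepsilon}$ large enough relative to the drift term.
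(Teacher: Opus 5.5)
Your proposal is essentially the paper's proof: the same $L_{1}$-smoothness descent over the $E$ local steps, and the same boundary jump $L_{2}E\eta B(\mathbb{C}-1)$ obtained from the $\mathbb{C}-1$ negative HPCL terms via $L_{2}$-Lipschitz continuity of $f$ and the parameter drift $\|w_{(r+1)E}-w_{rE}\|_{2}\le\eta\sum_{\mathbf{e}}\|\theta_{rE+\mathbf{e}}\|_{2}$ with $\mathbb{E}\|\theta\|_{2}\le B$, followed by the same telescoping and rearrangement into the bound on $R$. The only divergence is Step 2, where the paper uses neither convexity of averaging (which is unavailable for a non-convex $\mathcal{L}$) nor an extra assumption on the hyper-prototype shift; instead it treats $\mathcal{L}_{CE}$ and $\mathcal{L}_{HPAL}$ as unchanged across the boundary and passes from the HPCL log-sum-exp to the client-weighted embedding drift $\sum_{k}\frac{n_{k}}{N}\|f_{k,(r+1)E}(\xi)-f_{k,rE}(\xi)\|_{2}$ using $s(z,\mathcal{S}_{M}^{j})\le\|z\|_{2}$ and the reverse triangle inequality, so to land on the stated constant you must likewise discard the HPAL contribution you mention, since any nonzero HPAL term would enlarge the subtracted term $2L_{2}B(\mathbb{C}-1)$ in the denominator.
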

\cref{thm:ours} establishes that the expected $L_{2}$-norm of the gradients can be upper-bounded by an arbitrary $\boldsymbol{\varepsilon} > 0$. The smaller $\boldsymbol{\varepsilon}$ is, the larger $R$ is, which means that the tighter the bound is, the more communication rounds $R$ is required. Complete proofs are presented in Appendix~\cref{app:con}.

\section{Related Work}
\paragraph{Heterogeneous Federated Learning.} 
Federated learning is proposed to collaboratively train a global model in a distributed learning environment, \eg, Federated Averaging (FedAvg) \cite{mcmahan2017communication}. However, its performance and convergence are impeded due to non-IID data, which further causes data heterogeneity in FL \cite{li2020federated,kairouz2021advances,zhu2021federated}. 
To mitigate the data heterogeneity, existing solutions mainly focus on two directions: regulating local training \cite{karimireddy2020scaffold,li2020fedprox,wang2025fedsc} and optimizing the global model \cite{ye2023feddisco,huang2022learn,zhang2023fedala}. 
The former executes the adjustment on local training to decrease parameter differences across clients \cite{li2020fedprox,karimireddy2020scaffold}. 
The latter aims to improve the performance of the global model at the server side \cite{qi2024model,ye2023feddisco,chen2024feddat}. 
In addition, data augmentation \cite{yan2025simple,ma2025geometric,wen2022communication} and knowledge distillation \cite{zhang2024fedgmkd,wang2023dafkd} have emerged to bridge the gap between local distributions and the global one. 
Nevertheless, these methods only partially mitigate data heterogeneity and rely on auxiliary public data that may bring privacy risks.

\paragraph{Prototype Learning.} 
A prototype is defined as the mean value of features with the same class semantics \cite{yang2018robust,zhou2022rethinking,tan2022fedproto,dai2023tackling}. Due to its natural semantic characteristics and interpretability \cite{wang2023learning,nauta2023pip}, it has shown strong potential \cite{li2024panoptic,huang2025praga}.

Recently, there has been growing interest in leveraging prototypes to mitigate data heterogeneity in FL \cite{tan2022fedproto,zhang2024fedgmkd,wang2025fedsc,dai2023tackling,zhou2025fedsa}. 
The cornerstone prototype-based FL method, FedProto \cite{tan2022fedproto}, aggregates local prototypes collected from different clients to obtain global prototypes, and then sends the global prototypes back to all clients. 
FedTGP \cite{zhang2024fedtgp} uses an adaptive-margin-enhanced contrastive scheme to learn global prototypes, while FedSA \cite{zhou2025fedsa} procreates semantic knowledge across clients by their class prototypes.

However, these recent prototype-based FL methods (\eg, FedSA \cite{zhou2025fedsa}) still update global prototypes mainly through prototype-level aggregation or anchor refinement, which can inherit representation biases induced by heterogeneous clients. In contrast, we construct a set of hyper-prototypes as semantic anchors, linking class-relevant characteristics in each client. We optimize them through gradient matching on class-wise gradients distilled from real samples, without requiring access to any raw data.

\paragraph{Contrastive Learning.} 
Contrastive learning has become a promising direction for self-supervised learning \cite{he2020momentum,chen2020simple,gui2024survey,tian2020makes,wang2022contrastive}, achieving superior performance in vision transformers \cite{wang2022online,kim2023contrastive}, graph mining \cite{you2020graph,xu2021infogcl}, etc. Recent works are focused on how to select the positive and negative pairs via InfoNCE \cite{oord2018representation} and variants \cite{weinberger2023feature,zeng2024contrastive}. Another popular direction is to incorporate contrastive learning into the FL paradigm \cite{li2021model,tan2022fedpcl,seo2024relaxed,psaltis2023fedrcil}. In this work, contrastive learning is built upon the hyper-prototypes (via HPCL) with a client-specific margin to promote local training in heterogeneous FL.

\begin{table*}[t]
\caption{\textbf{Comparison results} under label and quantity skew. $\bm{\dagger}$ denotes the results obtained by exchanging both the prototypes and model parameters (please see analysis in Appendix~\cref{tab:pro_only}) during the FL training. 
Best results are in \textcolor{DeepRed}{\textbf{red bold}}, with second best in \textcolor{blue}{\textbf{blue bold}}.}
\label{tab:tab1}
\centering
    \resizebox{\linewidth}{!}{
    \renewcommand\arraystretch{1.06}
    \setlength{\arrayrulewidth}{0.3mm}
        \begin{tabular}{l||ccc|ccc|ccc|ccc}
        \hline\thickhline
        \rowcolor{mygray} 
        & \multicolumn{3}{c I}{\textsc{\textbf{CIFAR10}}} & \multicolumn{3}{c I}{\textsc{\textbf{HAM10000}}} 
        & \multicolumn{3}{c I}{\textsc{\textbf{TinyImageNet}}} & \multicolumn{3}{c}{\textsc{\textbf{CIFAR10-LT}} ($\mathtt{NID1}_{0.5}$)} \\
        \cline{2-4} \cline{5-7} \cline{8-10} \cline{11-13}
        \rowcolor{mygray} 
        \multirow{-2}{*}{\centering \textsc{\textbf{Methods}}} 
        & $\mathtt{NID1}_{0.2}$ & $\mathtt{NID1}_{0.5}$ & $\mathtt{NID2}$ 
        & $\mathtt{NID1}_{0.2}$ & $\mathtt{NID1}_{0.5}$ & $\mathtt{NID2}$ 
        & $\mathtt{NID1}_{0.2}$ & $\mathtt{NID1}_{0.5}$ & $\mathtt{NID2}$ 
        & $\rho=10$ & $\rho=50$ & $\rho=100$ \\
        \hline\hline
        FedAvg~\pub{AISTATS'17} 
            & 80.59 & 85.60 & 74.48 
            & 45.26 & 48.68 & 41.54 
            & 41.34 & 43.88 & 36.20 
            & 75.54 & 69.05 & 60.79 \\
        FedProx~\pub{MLSys'20} 
            & 80.74 & 85.54 & 74.67 
            & 45.42 & 48.55 & 41.21 
            & 41.16 & 43.50 & 36.62 
            & 75.15 & 69.23 & 60.86 \\
        MOON~\pub{CVPR'21} 
            & 82.04 & 86.95 & 76.07 
            & 47.33 & 50.24 & 43.10 
            & 43.22 & 44.36 & 37.15 
            & 76.11 & 69.70 & 60.91 \\
        $\text{FedProto}^{\bm{\dagger}}$~\pub{AAAI'22} 
            & 81.61 & 86.25 & 75.32 
            & 46.35 & 49.70 & 42.45 
            & 42.25 & 44.19 & 36.86 
            & 75.81 & 69.43 & 60.37 \\
        $\text{FedTGP}^{\bm{\dagger}}$~\pub{AAAI'24} 
            & 84.14 & 87.31 & \textcolor{blue}{\textbf{77.95}} 
            & 48.29 & 50.77 & 44.26 
            & 43.92 & 45.13 & 38.21 
            & 76.27 & 71.36 & 61.94 \\
        FedGMKD~\pub{NeurIPS'24} 
            & 83.56 & \textcolor{blue}{\textbf{88.09}} & 77.29 
            & 47.97 & 50.49 & 44.20 
            & 44.09 & 45.30 & 37.94 
            & \textcolor{blue}{\textbf{76.83}} & 71.21 & 62.13 \\
        FedRCL~\pub{CVPR'24} 
            & 83.80 & 87.76 & 77.51 
            & \textcolor{blue}{\textbf{48.60}} & 51.10 & 44.36 
            & \textcolor{blue}{\textbf{44.30}} & 45.24 & \textcolor{blue}{\textbf{38.76}} 
            & 76.60 & 71.55 & 62.41 \\
        $\text{FedSA}^{\bm{\dagger}}$~\pub{AAAI'25} 
            & \textcolor{blue}{\textbf{84.27}} & 87.93 & 77.86 
            & 48.45 & \textcolor{blue}{\textbf{51.28}} & \textcolor{blue}{\textbf{44.85}} 
            & 44.18 & \textcolor{blue}{\textbf{45.56}} & 38.35 
            & 76.75 & \textcolor{blue}{\textbf{72.30}} & \textcolor{blue}{\textbf{62.48}} \\
        \hline
        \rowcolor{lightyellow} FedHPro (Ours) 
            & \textcolor{DeepRed}{\textbf{85.98}} & \textcolor{DeepRed}{\textbf{89.56}} & \textcolor{DeepRed}{\textbf{79.70}} 
            & \textcolor{DeepRed}{\textbf{50.23}} & \textcolor{DeepRed}{\textbf{52.79}} & \textcolor{DeepRed}{\textbf{46.17}} 
            & \textcolor{DeepRed}{\textbf{45.64}} & \textcolor{DeepRed}{\textbf{46.90}} & \textcolor{DeepRed}{\textbf{40.52}} 
            & \textcolor{DeepRed}{\textbf{78.62}} & \textcolor{DeepRed}{\textbf{74.69}} & \textcolor{DeepRed}{\textbf{64.75}} \\
        \hline\thickhline
        \end{tabular}
    }
\end{table*}

\begin{table}[t]
\caption{\textbf{Comparison results} (domain skew), continuing \cref{tab:tab1}.}
\label{tab:tab2}
\centering
    \resizebox{\linewidth}{!}{
    \setlength\tabcolsep{3.88pt}
    \renewcommand\arraystretch{1.06}
    \setlength{\arrayrulewidth}{0.3mm}
        \begin{tabular}{l||cccc|cc}
        \hline\thickhline
        \rowcolor{mygray} 
        & \multicolumn{6}{c}{\textsc{\textbf{Office-Caltech}}} \\
        \cline{2-7}
        \rowcolor{mygray} 
        \multirow{-2}{*}{\centering \textsc{\textbf{Methods}}} 
        & Caltech & Webcam & Amazon & DSLR & AVG $\pmb{\uparrow}$ & $\triangle$ \\
        \hline\hline
        FedAvg 
            & 60.71 & 48.90 & 75.79 & 36.28 & 55.42 & -- \\
        FedProx 
            & 60.77 & 50.83 & 77.27 & 37.61 & 56.62 & +1.20 \\
        MOON 
            & 57.75 & 46.08 & 72.83 & 34.50 & 52.79 & -2.63 \\
        $\text{FedProto}^{\bm{\dagger}}$ 
            & 61.58 & 53.21 & 78.72 & 39.53 & 58.26 & +2.84 \\
        $\text{FedTGP}^{\bm{\dagger}}$ 
            & 61.25 & 54.97 & 78.79 & 42.57 & 59.39 & +3.97 \\
        FedGMKD 
            & 61.78 & 55.36 & 79.30 & 43.22 & 59.91 & +4.49 \\
        FedRCL 
            & 60.29 & 50.41 & 74.54 & 37.46 & 55.68 & +0.26 \\
        $\text{FedSA}^{\bm{\dagger}}$ 
            & 62.23 & 55.85 & 79.13 & 45.05 & \textcolor{blue}{\textbf{60.57}} & \textcolor{blue}{\textbf{+5.15}} \\
        \hline
        \rowcolor{lightyellow} 
        FedHPro 
            & 64.61 & 62.45 & 80.69 & 50.33 & \textcolor{DeepRed}{\textbf{64.52}} & \textcolor{DeepRed}{\textbf{+9.10}} \\
        \hline\thickhline
        \end{tabular}
    }
\end{table}

\section{Experiments}
\subsection{Experimental Setups} \label{sec:sec51}
\paragraph{Datasets and Models.} 
We consider popular datasets to cover medical, natural, and artificial scenarios: HAM10000 \cite{tschandl2018ham10000}, CIFAR10 \& CIFAR100 \cite{cifar2009krizhevsky}, TinyImageNet \cite{le2015tiny}, CIFAR10-LT, CIFAR100-LT, TinyImageNet-LT, Digits \cite{peng2019moment}, and Office-Caltech \cite{gong2012geodesic}. We use MobileNetV2 \cite{sandler2018mobilenetv2} for TinyImageNet (LT), and ResNet-10 \cite{he2016deep} for the others. All methods use the same architecture for fair comparison.

\paragraph{Federated Scenarios.} 
1) \textbf{Label Skew}: $\mathtt{NID1}_{\alpha}$ follows Dirichlet distribution \cite{wang2020fedma} ($\alpha$ as the non-IID level), $\mathtt{NID2}$ is a more extreme setting consistting of $6$ clients (each has a single class) and $1$ client has all classes. 2) \textbf{Quantity Skew}: we shape the original data into a long-tailed distribution by \cite{caoKaidi2019}, and $\rho$ means the ratio between sample sizes of the most frequent and lowest frequent class. 3) \textbf{Domain Skew}: Digits \cite{peng2019moment} includes $4$ domains: MNIST, USPS, SVHN, SYN with $10$ classes, and Office-Caltech \cite{gong2012geodesic} also consists of $4$ domains: Caltech, Webcam, Amazon, DSLR with $10$ classes. We set $20$ and $10$ clients and randomly assign these domains for Digits (M: $3$, U: $7$, SV: $6$, SY: $4$) and Office-Caltech (C: $3$, W: $1$, A: $2$, D: $4$), and each client's dataset is randomly sampled $1\%$ (Digits) and $20\%$ (Office-Caltech).

\paragraph{Baselines.} 
We compare FedHPro with SOTA baselines, including standard FedAvg \cite{mcmahan2017communication}, FedProx \cite{li2020fedprox}; contrastive-based MOON \cite{li2021model}, FedRCL \cite{seo2024relaxed}; prototype-based FedSA \cite{zhou2025fedsa}, FedProto \cite{tan2022fedproto}, FedTGP \cite{zhang2024fedtgp}, and FedGMKD \cite{zhang2024fedgmkd}. 
Besides, to ensure a fair comparison with parameter-based FL works (\eg, FedRCL), in our experiments, we exchange both model parameters and prototypes during training for FedProto, FedTGP, and FedSA (denoted by $\bm{\dagger}$ in the following tables, please see analysis in Appendix \cref{tab:pro_only}).

\paragraph{Configurations.} 
The global rounds $R$ as $100$, clients $K$ as $10$, and local epochs $E$ as $10$ are used, where all methods have little or no gain with more rounds. We follow \cite{li2020fedprox,li2021model} and use SGD to update model with a learning rate $\eta$ of $0.01$, a momentum of $0.9$, a batch size of $64$, and a weight decay of $10^{-5}$. The dimension $d$ of the embedding as $512$, $\tau$ in \cref{eq:eq12} as $0.05$, $|\mathcal{I}|$ in \cref{eq:eq7} as $5$, $M$ in \cref{eq:eq9} as $30$ by default. All experiments are run on a NVIDIA A5000 GPU. More details for experimental implementations, please refer to Appendix~\ref{app:c1}.

\begin{figure}[t]
    \centering
    \includegraphics[width=0.96\linewidth]{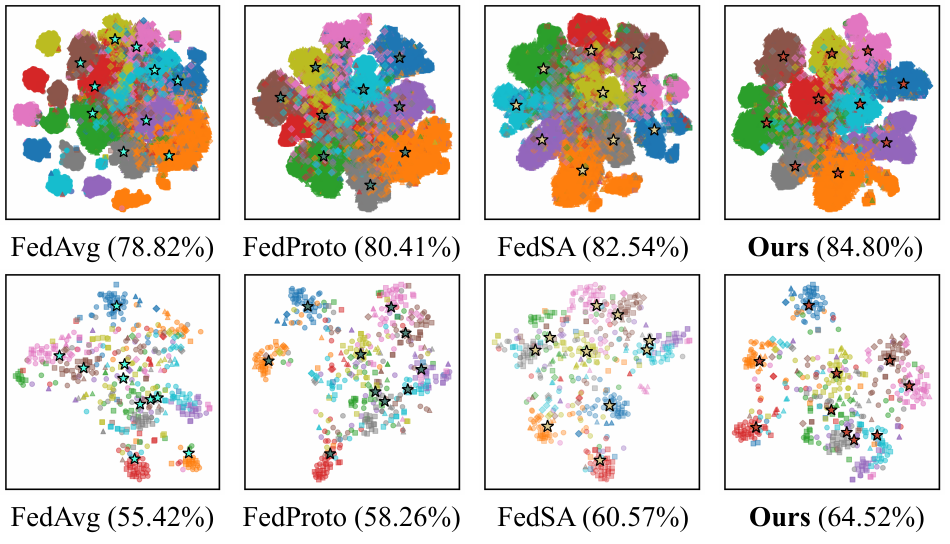}
    \caption{\textbf{T-SNE visualization} on Digits (\textbf{Top}) and Office-Caltech (\textbf{Bottom}). Each color represents one class, each shape represents one domain, and the stars represent semantic centers.}
    \label{fig:fig6}
\end{figure}

\subsection{Main Results} \label{sec:sec52}
\paragraph{Comparison with Baselines.} 
\cref{tab:tab1} provides comparison results under different FL scenarios ($\mathtt{NID1}$, $\mathtt{NID2}$, $\rho$). 
We can observe that FedHPro consistently outperforms eight SOTA baselines, confirming the significance of the hyper-prototypes in capturing consistent semantic representations and their effectiveness when utilized via HPCL and HPAL. 
Even with extremely imbalanced classes (\eg, on CIFAR10-LT ($\rho=100$) where class $0$ has $5000$ samples while class $9$ only has $50$ samples), FedHPro still attains optimal results owing to the hyper-prototypes globally learned from real samples through gradient matching. \cref{tab:tab2} further presents the results on Office-Caltech under the domain skew; such a challenging scenario demonstrates FedHPro's ability to promote semantic consistency across domains.

\begin{table}[t]
\caption{\textbf{Enabling existing prototype-based methods}, in which their prototypes are replaced with our hyper-prototypes.}
\label{tab:tab3}
\centering
    \resizebox{\linewidth}{!}{
    \setlength\tabcolsep{3.16pt}
    \renewcommand\arraystretch{1.3}
    \setlength{\arrayrulewidth}{0.3mm}
        \begin{tabular}{l||cccc|c}
        \hline\thickhline
        \rowcolor{mygray} 
        & \multicolumn{5}{c}{\textsc{\textbf{Digits}}} \\
        \cline{2-6}
        \rowcolor{mygray} 
        \multirow{-2}{*}{\centering \textsc{\textbf{Using HP}}} 
        & MNIST & USPS & SVHN & SYN & AVG $\pmb{\uparrow}$ \\
        \hline\hline
        $\text{FedProto}^{\bm{\dagger}}$ 
            & 97.48 \myred{0.08} 
            & 92.24 \myred{0.35} 
            & 80.90 \myred{1.68} 
            & 56.39 \myred{3.24} 
            & 81.75 \myred{1.34} \\
        $\text{FedTGP}^{\bm{\dagger}}$ 
            & 97.83 \myred{0.10} 
            & 92.67 \myred{0.24} 
            & 82.49 \myred{1.14} 
            & 57.46 \myred{2.27} 
            & 82.61 \myred{0.94} \\
        FedGMKD 
            & 97.85 \myred{0.05} 
            & 92.71 \myred{0.45} 
            & 82.52 \myred{1.39} 
            & 58.04 \myred{1.87} 
            & 82.79 \myred{0.96} \\
        $\text{FedSA}^{\bm{\dagger}}$ 
            & 97.98 \myred{0.20} 
            & 92.79 \myred{0.29} 
            & 82.93 \myred{1.16} 
            & 59.30 \myred{1.18} 
            & 83.25 \myred{0.71} \\
        \hline\thickhline
        \end{tabular}
    }
\end{table}

\paragraph{T-SNE Visualization.} 
\cref{fig:fig6} depicts t-SNE \cite{tsne2008} visualization of the representation space (\ie, the embedding space after the feature extractor) for different methods on Digits and Office-Caltech. The results show that FedHPro yields a larger inter-class distance (\ie, different colors are more separated) and a reduced intra-class distance (\ie, different shapes of the same color are clustered more compactly). This indicates FedHPro's success in establishing a more generalizable decision boundary across diverse domains under challenging heterogeneous FL scenarios.

\paragraph{Hyper-Prototypes as a Plug-and-Play Component.} As previously shown in \cref{fig:fig2}, hyper-prototypes can be easily incorporated with FedProto to improve model performance. \cref{tab:tab3} reports the accuracy of four existing prototype-based FL methods with their prototypes replaced with our hyper-prototypes on the Digits. The hyper-prototypes consistently improve the accuracy of these methods in multiple domains, particularly the SYN domain. This further confirms that our hyper-prototypes preserve better semantic consistency than the conventional prototypes during FL training.

\begin{table}[t]
\caption{\textbf{Efficiency comparison} per round ($20$ clients) on Digits.}
\label{tab:tab4}
\centering
    \resizebox{\linewidth}{!}{
    \setlength\tabcolsep{5.8pt}
    \renewcommand\arraystretch{1}
    \setlength{\arrayrulewidth}{0.3mm}
        \begin{tabular}{l||cccc}
        \hline\thickhline
        \rowcolor{mygray} 
        \textsc{\textbf{Methods}} & $\textsc{\text{Upload}}_{\times 10^{7}}$ & $\textsc{\text{Comm.}}$ 
        & $\textsc{\text{FLOPs}}_{\times 10^{10}}$ & $\textsc{\text{Time}}_{train}$ \\
        \hline\hline
        FedAvg 
            & 10.87 & 380.42MB & 1.02 & 23.11s \\
        $\text{FedProto}^{\bm{\dagger}}$ 
            & 10.90 & 387.57MB & 1.04 & 25.07s \\
        $\text{FedTGP}^{\bm{\dagger}}$ 
            & 10.90 & 387.57MB & 1.89 & 45.49s \\
        FedGMKD 
            & 10.92 & 392.32MB & 2.06 & 52.31s \\
        FedRCL 
            & 10.87 & 380.42MB & 1.73 & 41.69s \\
        $\text{FedSA}^{\bm{\dagger}}$ 
            & 10.90 & 387.57MB & 1.37 & 36.21s \\
        \hline
        \rowcolor{lightyellow} 
        FedHPro 
            & 10.90 & 387.57MB & 1.35 & 35.16s \\
        \hline\thickhline
        \end{tabular}
    }
\end{table}

\paragraph{Efficiency Comparison.} 
\cref{tab:tab4} compares communication and computation costs on the Digits. FedHPro acquires satisfactory performance with moderate training cost. Compared to FedAvg, we only additionally transmit the average gradient. Besides, FedHPro achieves competitive computation costs (\eg, FLOPs) against FedSA, where our training costs mainly come from the contrastive term of HPCL.

\begin{figure}[t]
    \centering
    \begin{minipage}[t]{0.485\linewidth}
        \centering
        \includegraphics[width=1.0\linewidth]{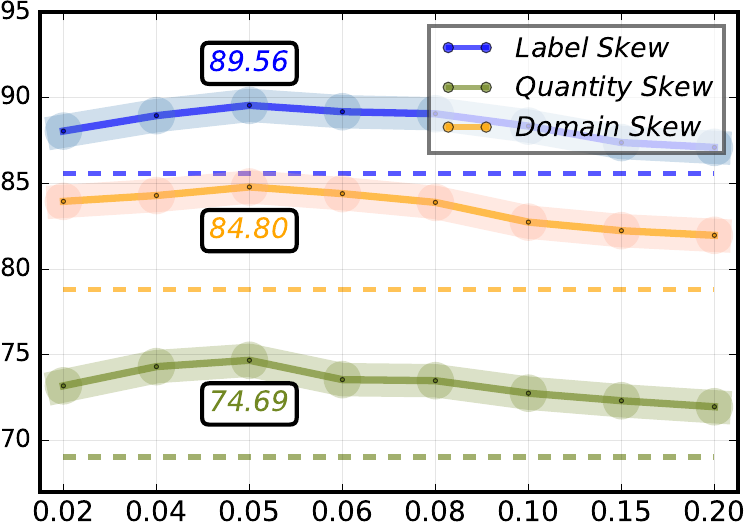}
    \end{minipage}
    \begin{minipage}[t]{0.485\linewidth}
        \centering
        \includegraphics[width=1.0\linewidth]{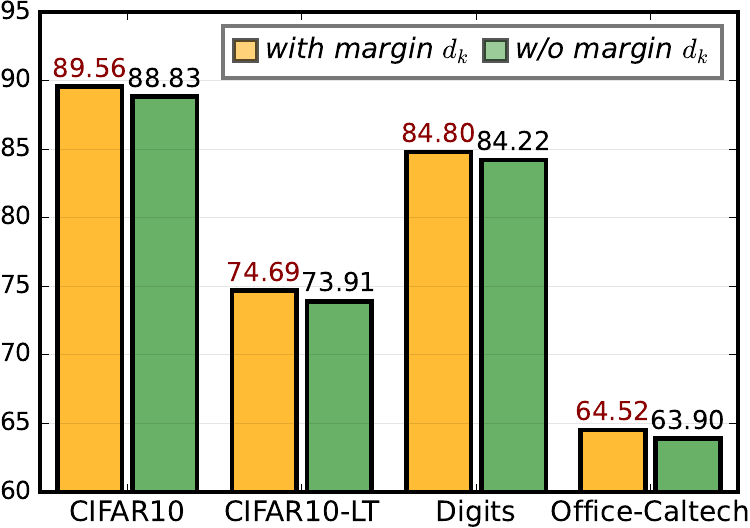}
    \end{minipage}
\caption{Settings: label skew (CIFAR10 with $\mathtt{NID1}_{0.5}$), quantity skew (CIFAR10-LT with $\mathtt{NID1}_{0.5}, \rho=50$), domain skew (Digits). \textbf{Left}: different $\tau$ of \cref{eq:eq12}, \textbf{dotted lines} denote the FedAvg. \textbf{Right}: with or without (\textit{w/o}) the margin $d_{k}$ of \cref{eq:eq12}.}
\label{fig:fig7}
\end{figure}

\begin{figure}[t]
    \centering
    \begin{minipage}[t]{0.49\linewidth}
        \centering
        \includegraphics[width=1.0\linewidth]{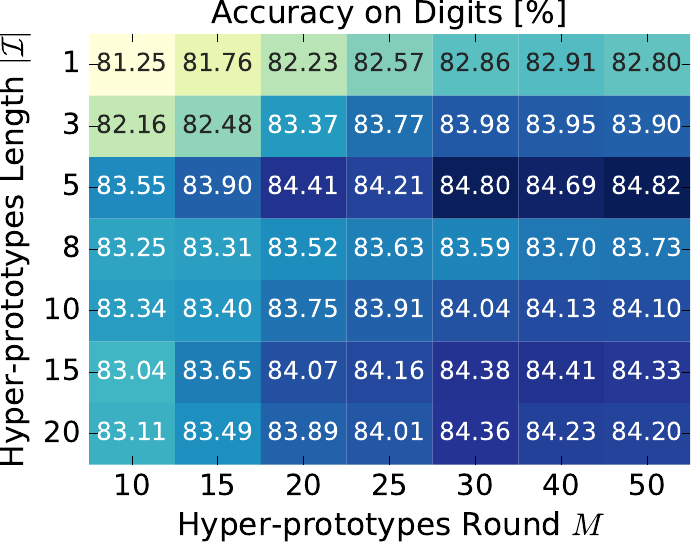}
    \end{minipage}
    \begin{minipage}[t]{0.49\linewidth}
        \centering
        \includegraphics[width=1.0\linewidth]{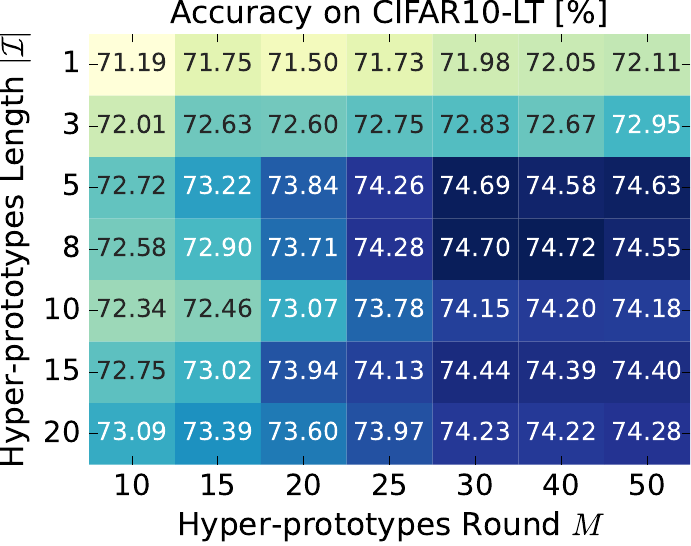}
    \end{minipage}
\caption{\textbf{Analysis of hyper-prototypes} lengths $|\mathcal{I}|$ of \cref{eq:eq7} and rounds $M$ of \cref{eq:eq9} for Digits under domain skew (\textbf{Left}) and CIFAR10-LT under quantity skew (\textbf{Right}).}
\label{fig:fig8}
\end{figure}

\begin{table}[t!]
\caption{\textbf{Ablation study for two modules} of FedHPro: HPCL and HPAL, on the Digits (\textbf{Top}) and Office-Caltech (\textbf{Bottom}).}
\label{tab:tab5}
\centering
    \resizebox{\linewidth}{!}{
    \renewcommand\arraystretch{0.98}
    \setlength{\arrayrulewidth}{0.3mm}
        \begin{tabular}{cc||cccc I c}
        \hline\thickhline
        \rowcolor{mygray} 
        \textsc{\textbf{HPCL}} & \textsc{\textbf{HPAL}} & MNIST & USPS & SVHN & SYN & AVG $\pmb{\uparrow}$ \\
        \hline\hline
        \ding{55} & \ding{55} & 96.68 & 90.43 & 76.35 & 51.85 & 78.82 \\
        \ding{51} & \ding{55} & 98.06 & 91.74 & 83.81 & 60.74 & 83.58 \\
        \ding{55} & \ding{51} & 98.38 & 91.93 & 84.16 & 61.30 & 83.94 \\
        \rowcolor{lightyellow} 
        \ding{51} & \ding{51} & 98.52 & 93.13 & 84.95 & 62.59 & \textcolor{DeepRed}{\textbf{84.80}} \\
        \hline\thickhline
        \rowcolor{mygray} 
        \textsc{\textbf{HPCL}} & \textsc{\textbf{HPAL}} & Caltech & Webcam & Amazon & DSLR & AVG $\pmb{\uparrow}$ \\
        \hline\hline
        \ding{55} & \ding{55} & 60.71 & 48.90 & 75.79 & 36.28 & 55.42 \\
        \ding{51} & \ding{55} & 63.27 & 57.10 & 79.64 & 43.67 & 60.92 \\
        \ding{55} & \ding{51} & 64.16 & 55.35 & 79.11 & 46.75 & 61.34 \\
        \rowcolor{lightyellow} 
        \ding{51} & \ding{51} & 64.61 & 62.45 & 80.69 & 50.33 & \textcolor{DeepRed}{\textbf{64.52}} \\
        \hline\thickhline
        \end{tabular}
    }
\end{table}

\paragraph{Comparison Results of Convergence Rates.} 
We plot the average accuracy per epoch for FedHPro and other baselines in \cref{fig:rate}. 
Our FedHPro required fewer rounds to converge compared with other baselines and achieved a higher final accuracy, showing the superiority of FedHPro in dealing with non-IID data and its robust stability. 
These results further indicate that FedHPro helps to promote both the generalizability and efficiency during the FL training. 
The comparison results of convergence rates on Digits and Office-Caltech datasets are provided in \cref{tab:app4}.

\paragraph{Impact of Model Heterogeneity.} 
As shown in \cref{tab:app_model_h}, we report the performance of FedHPro and other baselines on four types of model heterogeneity. 
We observe that the performance of the prototype-based FL methods significantly decreases as the number of feature extractors $\texttt{X}$ increases. This suggests that prototype-based FL methods heavily rely on the quality of data representations. 
Both averaging local prototypes (\eg, FedProto) and refining global anchors (\eg, FedSA) lead to semantic drift, resulting in catastrophic performance degradation. 
In contrast, FedHPro parameterizes semantic anchors as a set of learnable hyper-prototypes, optimized via gradient matching to align with class-relevant characteristics distilled from clients' real samples, thereby effectively mitigating this semantic drift.

\begin{figure*}[t!]
    \centering
    \begin{minipage}[t]{0.388\linewidth}
        \centering
        \includegraphics[width=1.0\linewidth]{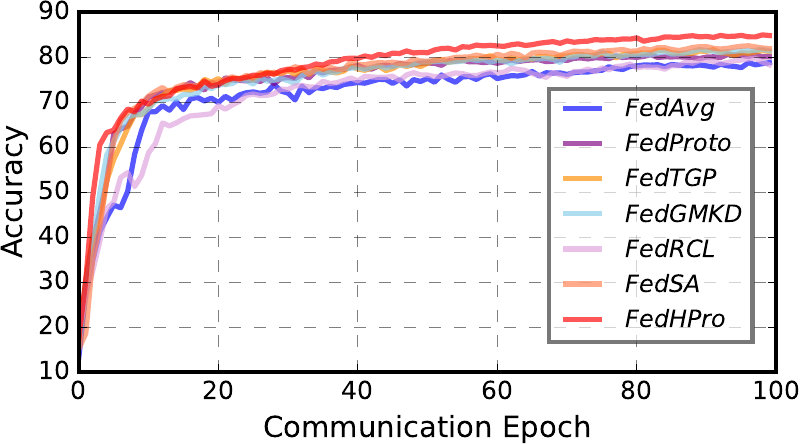}
    \end{minipage}
    \begin{minipage}[t]{0.388\linewidth}
        \centering
        \includegraphics[width=1.0\linewidth]{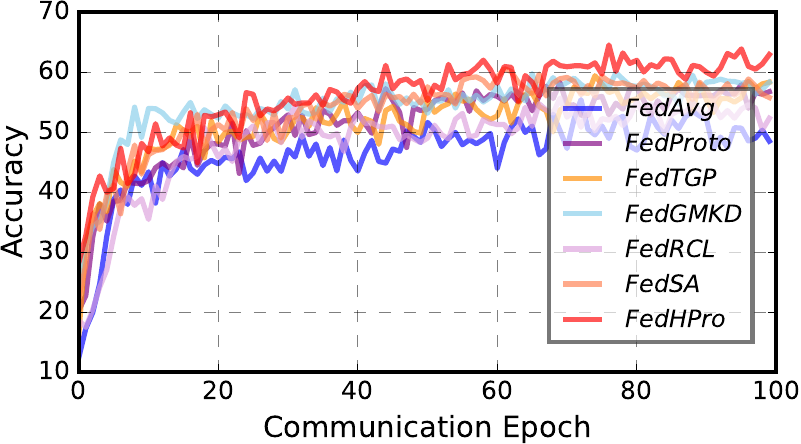}
    \end{minipage}
\caption{Comparison of convergence in the \textbf{Average Accuracy Trend} on the Digits (\textbf{Left}) and Office-Caltech (\textbf{Right}). We also provide quantitative results for the convergence rates in \cref{tab:app4}.}
\label{fig:rate}
\end{figure*}

\subsection{Validation Analysis} \label{sec:sec53}
To thoroughly analyze the validity of essential modules in FedHPro, we perform the following ablative studies:
\paragraph{Hyper-Parameter Analysis.} 
We first present a quantitative result to investigate the effect of temperature $\tau$ in \cref{eq:eq12}, as shown in \cref{fig:fig7}-\textit{Left}, which reveals that a smaller $\tau$ is more beneficial than the higher ones, and FedHPro works stably \textit{w.r.t.} $\tau \in [0.04, 0.06]$. Besides, accuracy improves progressively with increasing $\tau$, and the amelioration becomes negligible when $\tau=0.05$. 
Then, as shown in \cref{fig:fig7}-\textit{Right}, we analyze the effect of $d_{k}$ in \cref{eq:eq12}. The results confirm that the client-specific margin $d_{k}$ encourages better class-wise representations by adaptively sharpening the decision boundary, yielding consistent improvements across different FL scenarios.

We next focus on the hyper-prototypes $\mathcal{S}_{M}$ and provide the overall performance with different lengths $|\mathcal{I}|$ (\cref{eq:eq7}) and rounds $M$ (\cref{eq:eq9}). 
As presented in \cref{fig:fig8}, we observe that the accuracy improves gradually as $|\mathcal{I}|$ and $M$ increase, but an excessively high $|\mathcal{I}|$ is detrimental to training due to optimization instability. 
Moreover, enlarging $M$ leads to higher accuracy in FedHPro, and the gain becomes marginal when $M=30$. Empirically, to achieve a trade-off on efficiency, $|\mathcal{I}|=5$ and $M=30$ are recommended. 
The ablation studies of the number of clients $K$, local epochs $E$, and feature dimension $d$ are provided in Appendix~\ref{app:c2}.

\paragraph{Module Ablation.} 
We first provide an empirical result in \cref{tab:tab5} for the two modules of FedHPro (the first row refers to FedAvg). 
We observe that: 1) HPCL brings significant gains over the baseline, indicating that our HPCL is able to promote inter-class separability while preserving semantics; 2) HPAL also yields solid improvements, proving the importance of aligning embeddings across clients via smooth regularization; and 3) Combining HPCL and HPAL attains optimal accuracy, indicating they complement each other in enhancing FL local training under data heterogeneity.

\begin{table}[t]
\caption{\textbf{Comparison for hyper-prototypes and global prototypes} ($\bm{\mathbb{P}}$ in \cref{eq:eq4}), $\Rightarrow$ means replace with $\bm{\mathbb{P}}$ in the loss.}
\label{tab:tab6}
\centering
    \resizebox{\linewidth}{!}{
    \renewcommand\arraystretch{1.1}
    \setlength{\arrayrulewidth}{0.3mm}
        \begin{tabular}{l||cc | cc | cc}
        \hline\thickhline
        \rowcolor{mygray} 
        & \multicolumn{2}{c I}{\textsc{\textbf{Digits}}} & \multicolumn{2}{c I}{\textsc{\textbf{Office-Caltech}}} & \multicolumn{2}{c}{\textsc{\textbf{CIFAR10-LT}}} \\
        \cline{2-3} \cline{4-5} \cline{6-7} 
        \rowcolor{mygray} 
        \multirow{-2}{*}{\centering \textsc{\textbf{Using Prototypes}}} 
        & AVG $\pmb{\uparrow}$ & $\triangle$ 
        & AVG $\pmb{\uparrow}$ & $\triangle$ 
        & ACC $\pmb{\uparrow}$ & $\triangle$ \\
        \hline\hline
        \rowcolor{lightyellow} Hyper-Prototypes & \textcolor{DeepRed}{\textbf{84.80}} & -- & \textcolor{DeepRed}{\textbf{64.52}} & -- & \textcolor{DeepRed}{\textbf{74.69}} & -- \\
        \hline
        Using $\bm{\mathbb{P}} \Rightarrow$ \textsc{HPCL} & 83.72 & -1.08 & 63.15 & -1.37 & 73.22 & -1.47 \\
        Using $\bm{\mathbb{P}} \Rightarrow$ \textsc{HPAL} & 83.29 & -1.51 & 62.42 & -2.10 & 72.89 & -1.80 \\
        Using $\bm{\mathbb{P}} \Rightarrow$ \textsc{Both} & 81.35 & -3.45 & 59.69 & -4.83 & 72.56 & -2.13 \\
        \hline\thickhline
        \end{tabular}
    }
\end{table}

To further confirm the effectiveness of hyper-prototypes, as shown in \cref{tab:tab6}, we use global prototypes $\bm{\mathbb{P}}$ (\cref{eq:eq4}) to replace our hyper-prototypes in HPCL (\cref{eq:eq12}) or HPAL (\cref{eq:eq13}). 
It is evident that using global prototypes largely weakens both HPCL and HPAL, indicating the superiority of the hyper-prototypes over the conventional prototypes from the proposed module perspective.
\paragraph{Extra Results.} 
The analysis of clients $K$ and local epochs $E$ is provided in \cref{tab:app56}. As shown in \cref{tab:app7}, we also investigate the feature dimension $d$ in FedHPro to evaluate its impact on model performance. 
\cref{tab:app8} demonstrates that FedHPro is easier to obtain a fair solution than FedAvg. 
Besides, \cref{tab:app_model_h} further illustrates the effectiveness of FedHPro under model heterogeneity. \cref{tab:tab_text} also shows the applicability of FedHPro to the text-modality scenario. More details, please see Appendix \cref{app:c2}.

\section{Conclusion}
This paper proposes hyper-prototypes, which are optimized via gradient matching to align with class-relevant characteristics distilled from real samples rather than prototype-level descriptors, promoting substantially improved semantic consistency across clients. 
By leveraging hyper-prototypes, we further propose FedHPro with two key modules, HPCL and HPAL, to encourage the inter-class separability and intra-class uniformity during FL local training. 
Extensive experiments on diverse FL scenarios validate the effectiveness of FedHPro and the contribution of hyper-prototypes.

\section*{Acknowledgment}
This work is supported by the Lee Kong Chian Fellowship (T050273), A*STAR under its MTC YIRG Grant (M24N8c0103), and the Singapore Ministry of Education (MOE) Academic Research Fund (AcRF) Tier 1 Grant (24-SIS-SMU-008).

\section*{Impact Statement}
This work advances prototype-based FL to better handle heterogeneous clients while keeping training data decentralized, which could enable more reliable collaboration in privacy-sensitive settings (\textit{e.g.}, financial monitoring).

\clearpage
\newpage
\bibliography{main}
\bibliographystyle{icml2026}

\clearpage
\newpage
\appendix
\onecolumn

\renewcommand{\theequation}{A\arabic{equation}}
\renewcommand{\thefigure}{A\arabic{figure}}
\renewcommand{\thetable}{A\arabic{table}}
\setcounter{equation}{0}
\setcounter{figure}{0}
\setcounter{table}{0}
\setcounter{theorem}{0}
\setcounter{assumption}{0}

\section{Algorithm Pseudo-code Flow}
In this section, we describe the pseudo-code of our FedHPro in \cref{alg:ours}: 1) \textit{Server-Side}: we optimize the simulated hyper-prototypes via gradient matching; 2) \textit{Client-Side}: we leverage the hyper-prototypes to promote FL local training.

\begin{algorithm}[H]
\caption{\colorbox{myyellow}{FedHPro: Federated Hyper-Prototype Learning}}
\label{alg:ours}
\SetAlgoLined
\SetNoFillComment
\SetArgSty{textnormal}
\small{\KwIn{Communication Rounds $R$, Local Epochs $E$, Clients $K$, $k$-th client's data $\mathcal{D}_{k}$ and model $w_{k}$}}
\small{\KwOut{The final FL global model $w_{R}^{*}$}}

{\color{LightBlue}{\tcc{\textbf{\colorbox{myblue}{Server-Side Executing}}}}}

\For {each communication round $r=1, 2, ..., R$}{
    $\mathcal{A}_{r} \Leftarrow \{1, ..., K\}$ {\footnotesize{\color{DarkBlue}{\tcp{randomly select clients}}}}
    
    \For {each client $k \in \mathcal{A}_{r}$ \textbf{in parallel}}{
        $w_{k}, \mathbf{g}_{k} \leftarrow \;$ \KwSty{LocalUpdating}($w_{r}^{*}, \mathcal{S}_{M}$)
    }
    
    $w_{r+1}^{*} \leftarrow \sum_{k=1}^{|\mathcal{A}_{r}|} \frac{n_{k}}{N} w_{k}$ {\footnotesize{\color{DarkBlue}{\tcp{global aggregation}}}}
    
    {\footnotesize{\color{DarkBlue}{\tcc{optimizing hyper-prototypes (\cref{sec:approx})}}}}
    
    $\mathcal{G} = \textbf{[} \mathbf{g}^{1}, \ldots, \mathbf{g}^{c}, \ldots, \mathbf{g}^{\mathbb{C}} \textbf{]} \leftarrow \;$ \cref{eq:eq6}
    
    $\mathcal{G}_{\mathtt{HP}} = \textbf{[} \mathbf{g}_{\mathtt{HP}}^{1}, \ldots, \mathbf{g}_{\mathtt{HP}}^{c}, \ldots, \mathbf{g}_{\mathtt{HP}}^{\mathbb{C}} \textbf{]} \leftarrow \;$ \cref{eq:eq7}

    $\mathcal{L}_{GM}(\mathcal{G}, \mathcal{G}_{\mathtt{HP}}) \leftarrow \;$ \cref{eq:eq8} {\footnotesize{\color{DarkBlue}{\tcp{gradient matching}}}}

    $\mathcal{S}_{M}=\textbf{[} \mathcal{S}_{M}^{1}, \ldots, \mathcal{S}_{M}^{c}, \ldots, \mathcal{S}_{M}^{\mathbb{C}} \textbf{]} \leftarrow \;$ \cref{eq:eq9}
}

{\color{LightBlue}{\tcc{\textbf{\colorbox{myblue}{Client-Side Updating}}}}}

\KwSty{LocalUpdating}($w_{r}^{*}, \mathcal{S}_{M}$):

$w_{k} \leftarrow w_{r}^{*}$ {\footnotesize{\color{DarkBlue}{\tcp{distribute global parameters}}}}

\For {each local epoch $e=1, 2, ..., E$}{
    \For {each batch $b \in $ private data $\mathcal{D}_{k}$}{
        $\mathcal{L}_{CE} = \sum_{i \in b} -\mathbf{1}_{y_{i}} \log(\delta(h_{k}(z_{i}))) \leftarrow \;$ \cref{eq:eq14}

        {\footnotesize{\color{DarkBlue}{\tcc{Hyper-Prototype Contrastive Learning (\cref{sec:sec41})}}}}

        $d_{k} \leftarrow \;$ \cref{eq:eq10} {\footnotesize{\color{DarkBlue}{\tcp{client margin}}}}
        
        $\mathcal{L}_{HPCL}(z_{i}, \mathcal{S}_{M}^{c}, \mathcal{N}_{M}^{c}, d_{k})_{i \in b} \leftarrow \;$ \cref{eq:eq12}

        {\footnotesize{\color{DarkBlue}{\tcc{Hyper-Prototype Alignment Learning (\cref{sec:sec42})}}}}

        $\mathcal{L}_{HPAL}(z_{i}, \mathcal{H}_{M}^{c})_{i \in b} \leftarrow \;$ \cref{eq:eq13}

        $\mathcal{L} = \mathcal{L}_{CE} + \mathcal{L}_{HPCL} + \mathcal{L}_{HPAL}$

        $w_{k} \leftarrow w_{k} - \eta \nabla \mathcal{L}(w_{k};b)$ {\footnotesize{\color{DarkBlue}{\tcp{update model}}}}
    }
}

$\mathbf{g}_{k} = \{\}$ {\footnotesize{\color{DarkBlue}{\tcp{initialize gradients}}}}

\For {each class $c=1, 2, ..., \mathbb{C}$}{
    $\mathbf{g}_{k}^{c} = \frac{1}{n_{k}^{c}} \sum_{(x_{i}, y_{i}) \in \mathcal{D}_{k}^{c}} \nabla_{z_{i}} \mathcal{L}_{k}(x_{i}, y_{i}) \leftarrow \;$ \cref{eq:eq5}
    
    $\mathbf{g}_{k} = \mathbf{g}_{k} \cup \{ \mathbf{g}_{k}^{c} \}$
}
return $w_{k}, \mathbf{g}_{k}$ to the server
\end{algorithm}

\section{Convergence Analysis of FedHPro} \label{app:con}
In this section, we provide a theoretical analysis of the proposed FedHPro under non-convex objectives. 
This analysis of FedHPro is grounded in four standard assumptions prevalent in the field of FL, which are similar to existing general works \cite{li2020fedprox,ye2023feddisco,gao2022feddc,Li2020On}. We describe the notations in \cref{app:b1}, the standard assumptions in \cref{app:b2}, and the proofs in \cref{app:b3}.

\subsection{Preliminaries} \label{app:b1}
We introduce some additional variables to better represent the FL training process and model updates from a mathematical perspective. For the $k$-th client, based on \cref{sec:background}, we define local model's parameters as $w_{k}=\{\phi_{k}, \mu_{k}\}$, where $f_{k}(\phi_{k})$ is a feature extractor maps each sample $x$ to a $d$-dim embedding $z=f_{k}(x)$; and $h_{k}(\mu_{k})$ is a classifier maps the embedding $z$ into a $\mathbb{C}$-dim output vector $\ell=h_{k}(z)$. 
In addition, $\mathcal{D}_{k}=\{x_{i}, y_{i}\}|_{i=1}^{n_{k}}$ is the training dataset for $k$-th client, $n_{k}$ is the number of total samples, the label $y_{i}$ belongs to one of $\mathbb{C}$ categories and is expressed as $y_{i}\in\{1,2,\ldots,\mathbb{C}\}$. In each communication round, each client $k$ uploads their local model parameters $w_{k}$ and average gradients $\mathbf{g}_{k}=\{\mathbf{g}_{k}^{1},\ldots,\mathbf{g}_{k}^{c},\ldots,\mathbf{g}_{k}^{\mathbb{C}}\}$ to the server. Then, the server optimizes hyper-prototypes via gradient matching (\cref{sec:approx}) and aggregates local models to get the global model. In the next round, both the global model and hyper-prototypes are broadcast to each participant. We denote $\mathcal{L} = \mathcal{L}_{CE} + \mathcal{L}_{HPCL} + \mathcal{L}_{HPAL}$ as $\mathcal{L}_{r}$ with a subscript indicating the round $r$ of global rounds.

Besides, we express $\mathbf{e} \in \{1, 2, \ldots, E\}$ as the local iteration step, $r$ as the global round, $E$ as the local epochs, $rE+\mathbf{e}$ refers to the $\mathbf{e}$-th local update in the round $r+1$, and $rE+1$ denotes the time between $r$-th global aggregation at the server and starting the model update on round $r+1$ at the local client. Therefore, if the step is at $rE+1$ of the local iterations, the local model is the same as the step at $rE$ of the local iterations.

\subsection{Assumptions} \label{app:b2}
Let $w_{k,r}=\{\phi_{k,r},\mu_{k,r}\}$ as the local model's parameters of $k$-th client at round $r$, $f_{k}(\phi_{k})$ as a feature extractor, and $h_{k}(\mu_{k})$ as a classifier. 
Our convergence analysis is based on the following four standard assumptions in FL:
\begin{assumption}[Smoothness] \label{assumption1}
    Each objective function $\mathcal{L}$ is $L_{1}$-Lipschitz smooth, which suggests the gradient of objective function $\mathcal{L}$ is $L_{1}$-Lipschitz continuous:
    \begin{equation} \label{eq:app1}
        \| \nabla\mathcal{L}_{k,r_{1}} - \nabla\mathcal{L}_{k,r_{2}}\|_{2} \leq 
        L_{1} \| w_{k,r_{1}} - w_{k,r_{2}}\|_{2}, \quad \forall \; r_{1},r_{2} > 0, \enspace k \in \{1,\ldots,K\}.
    \end{equation}
    Based on the derivative of the differentiable function, it further implies the following quadratic bound ($L_{1} > 0$) as:
    \begin{equation} \label{eq:app2}
        \mathcal{L}_{k,r_{1}} - \mathcal{L}_{k,r_{2}} \leq (\nabla\mathcal{L}_{k,r_{2}})^{\top} (w_{k,r_{1}} - w_{k,r_{2}}) + \frac{L_{1}}{2} \| w_{k,r_{1}} - w_{k,r_{2}}\|_{2}^{2}.
    \end{equation}
\end{assumption}
\begin{assumption}[Unbiased Gradient and Bounded Variance] \label{assumption2}
    For each client, the stochastic gradient is unbiased ($\xi$ is the sample of the client's dataset), so the unbiased gradient is expressed as follows:
    \begin{equation} \label{eq:app3}
        \mathbb{E}_{\xi}[\nabla \mathcal{L}(w_{k,r}, \xi)] = \nabla \mathcal{L}(w_{k,r}) = \nabla \mathcal{L}_{r}, \quad \forall \; k \in \{1,\ldots,K\}, \enspace r>0,
    \end{equation}
    and its variance is further bounded by $\sigma^{2}$ as:
    \begin{equation} \label{eq:app4}
        \mathbb{E}_{\xi}[\| \nabla \mathcal{L}(w_{k,r}, \xi) - \nabla \mathcal{L}(w_{k,r}) \|_{2}^{2}] \leq 
        \sigma^{2}, \quad \forall \; k \in \{1,\ldots,K\}, \enspace \sigma^{2} \geq 0.
    \end{equation}
\end{assumption}
\begin{assumption}[Bounded Dissimilarity of Stochastic Gradients] \label{assumption3}
    For each objective function $\mathcal{L}$, there exists a constant $B > 0$ such that the stochastic gradient is bounded:
    \begin{equation} \label{eq:app5}
        \mathbb{E}_{\xi}[\| \nabla \mathcal{L}(w_{k,r}, \xi) \|_{2}] \leq B, \quad \forall \; k \in \{1,\ldots,K\}, \enspace B>0, \enspace r>0.
    \end{equation}
\end{assumption}
\begin{assumption}[Continuity] \label{assumption4}
    For each real-valued function $f(\phi)$ is $L_{2}$-Lipschitz continuous:
    \begin{equation} \label{eq:app6}
        \| f_{k,r_{1}}(\phi_{k,r_{1}}) - f_{k,r_{2}}(\phi_{k,r_{2}}) \| \leq 
        L_{2} \| \phi_{k,r_{1}} - \phi_{k,r_{2}}\|_{2}, \quad \forall \; r_{1},r_{2} > 0, \enspace k \in \{1,\ldots,K\}.
    \end{equation}
\end{assumption}

\subsection{Completing Proofs} \label{app:b3}
In this subsection, based on the above four assumptions, we present the following theorems and prove them:
\begin{theorem}[Deviation bound of the objective function $\mathcal{L}$] \label{app:mytheorem1}
    Let the assumptions hold, after every communication round, the objective function $\mathcal{L}$ of an arbitrary client will be bounded,
    \begin{equation}
        \mathbb{E}[\mathcal{L}_{(r+1)E}] \leq \mathcal{L}_{rE}-\left(\eta-\frac{L_{1}\eta^{2}}{2}\right)EB^{2}+
        \frac{L_{1}E\eta^{2}}{2}\sigma^{2} + L_{2}E \eta B(\mathbb{C}-1),
    \end{equation}
\end{theorem}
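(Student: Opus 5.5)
The plan is the standard one-round descent argument used for FedProto-style analyses, in two stages. The first stage bounds the decrease of $\mathcal{L}$ over the $E$ local SGD steps of round $r+1$ with the hyper-prototypes $\mathcal{S}_{M}$ held fixed. The second stage bounds the increase of $\mathcal{L}$ caused by the server step at the round boundary, where the global model is re-aggregated and the hyper-prototypes entering $\mathcal{L}_{HPCL}$ and $\mathcal{L}_{HPAL}$ are re-optimized via \cref{eq:eq8}.

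\textbf{Stage 1 (local steps).} For each step $\mathbf{e}$, use $w_{rE+\mathbf{e}+1}=w_{rE+\mathbf{e}}-\eta g_{rE+\mathbf{e}}$ with stochastic gradient $g$ in the quadratic bound \cref{eq:app2} of \cref{assumption1}. Taking expectation over $\xi$ and using unbiasedness \cref{eq:app3} turns the linear term into $-\eta\|\nabla\mathcal{L}_{rE+\mathbf{e}}\|_{2}^{2}$. For the second-order term, write $\mathbb{E}\|g\|^{2}=\|\nabla\mathcal{L}\|^{2}+\mathbb{E}\|g-\nabla\mathcal{L}\|^{2}\le\|\nabla\mathcal{L}\|^{2}+\sigma^{2}$ using \cref{eq:app4}. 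This gives
\[
\mathbb{E}[\mathcal{L}_{rE+\mathbf{e}+1}]\le\mathcal{L}_{rE+\mathbf{e}}-\left(\eta-\frac{L_{1}\eta^{2}}{2}\right)\|\nabla\mathcal{L}_{rE+\mathbf{e}}\|_{2}^{2}+\frac{L_{1}\eta^{2}}{2}\sigma^{2}.
\]
Summing telescopically over $\mathbf{e}=1,\ldots,E$ (recall $rE+1$ coincides with $rE$) yields the first two correction terms, with the gradient norms then replaced by $B^{2}$ via \cref{assumption3}.

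This replacement is where I am least sure of the step. \cref{assumption3} gives an upper bound, whereas a subtracted term needs a lower bound. I expect the paper simply substitutes $B^{2}$, in effect treating $B$ as a bound attained, or a lower bound, on $\|\nabla\mathcal{L}\|$. I would state this explicitly as an identification rather than hide it; it also seems to be the form in which $\boldsymbol{\varepsilon}$ later replaces $B^{2}$ in \cref{thm:ours}.

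\textbf{Stage 2 (round boundary).} I expect this to be the main obstacle. Since the model before and after the round is compared through the same loss, the only change to control is in the prototype-dependent terms $\mathcal{L}_{HPCL}+\mathcal{L}_{HPAL}$ when $\mathcal{S}_{M}$ and $\mathcal{H}_{M}^{c}$ are updated.

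\begin{itemize}
\item \emph{Lipschitz reduction.} Both terms are $1$-Lipschitz in the embedding/prototype argument: the Huber form \cref{eq:eq13} has slope at most $1$, and the contrastive log-sum-exp is Lipschitz up to the normalization. So the change is at most a sum, over the $\mathbb{C}-1$ negative classes (and the positive one, absorbed), of distances between old and new prototype/embedding vectors.
\item \emph{Model-drift bound.} Each such distance is a difference of outputs of $f$ across rounds. \cref{assumption4} bounds it by $L_{2}\|\phi_{(r+1)E}-\phi_{rE}\|$. That parameter drift is the sum of $E$ steps of size $\eta\|g\|$, so its expectation is at most $E\eta B$ by \cref{assumption3}.
\end{itemize}

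Multiplying by the $\mathbb{C}-1$ negative-class contributions gives the term $L_{2}E\eta B(\mathbb{C}-1)$. Combining with Stage 1 gives the claimed inequality. The delicate part is justifying that the server's gradient-matching update moves each prototype by no more than the feature drift of the underlying extractor. I would argue this by noting that the matched gradients \cref{eq:eq5} are functions of $z=f_{k}(x)$, so the optimal hyper-prototypes track the embeddings. If that is too loose, I would simply posit the prototype displacement is controlled by the extractor drift, which \cref{assumption4} is evidently intended to supply.
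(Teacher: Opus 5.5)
Your outline follows the paper's proof almost step for step. You use local descent from Assumptions~\ref{assumption1} and~\ref{assumption2}, telescoped over the round; your variance decomposition is in fact the correct form of the paper's variance step. You then add a boundary term from Assumption~\ref{assumption4}, with parameter drift bounded by $E\eta B$ and counted over the $\mathbb{C}-1$ negative classes, and finally put $B^{2}$ in place of the gradient norms. The two steps you flag are genuine gaps, and the paper does not close them either.

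The $B^{2}$ substitution is step (1) of \eqref{eq:app17}, and no ``identification'' rescues it. By unbiasedness and Jensen, Assumption~\ref{assumption3} already gives $\|\nabla\mathcal{L}_{rE+\mathbf{e}}\|_{2}\le B$. The lower bound that a subtracted term needs would therefore force $\|\nabla\mathcal{L}_{rE+\mathbf{e}}\|_{2}=B$ at every local iterate. In fact the displayed inequality is false under the stated assumptions. All three loss terms are nonnegative, so the left side is at least $0$. But Assumption~\ref{assumption3} stays true when $B$ is enlarged, and for $\eta<2/L_{1}$ the right side then tends to $-\infty$. For $\eta\ge 2/L_{1}$ the substitution is legitimate, but the bound then certifies no decrease, and Theorem~\ref{app:mytheorem2} excludes that range anyway. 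What can be proved is the first line of \eqref{eq:app17}, which keeps $-(\eta-\frac{L_{1}\eta^{2}}{2})\sum_{\mathbf{e}}\mathbb{E}\|\nabla\mathcal{L}_{rE+\mathbf{e}}\|_{2}^{2}$. This is also what the paper restarts from in \eqref{eq:app20} to obtain Theorem~\ref{thm:ours}; there $\boldsymbol{\varepsilon}$ bounds the averaged squared gradient norm rather than standing in for $B^{2}$.

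For the boundary term, your proposed justification does not work, and the paper never bounds the hyper-prototype displacement at all. Its chain has three moves:
\begin{itemize}
\item It asserts that only $\mathcal{L}_{HPCL}$ changes at the boundary.
\item It bounds the log-sum-exp difference by $\sum_{j}(\|f_{k,(r+1)E}(\xi)\|_{2}-\|f_{k,rE}(\xi)\|_{2})$ via $s(z,\mathcal{S}_{M}^{j})\le\|z\|_{2}$. That is an upper bound on each term, which does not bound a difference.
\item In step (4) of \eqref{eq:app14} it introduces the weights $n_{k}/N$. This is the form one gets for a prototype that is a weighted average of client embeddings, as in \eqref{eq:eq4}.
\end{itemize}
So the paper silently assumes exactly the link you propose to posit. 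That link does not follow from gradient matching, for four reasons. First, \eqref{eq:eq8} fixes only the direction of $\mathbf{g}_{\mathtt{HP}}^{c}$. Second, the map $\mathbf{s}\mapsto\nabla_{\mathbf{s}}\mathcal{L}_{vir}(h^{*}(\mathbf{s}),c)$ is far from injective; for a linear head it is constant along a null space of dimension at least $d-\mathbb{C}$. Third, the target $\mathbf{g}^{c}$ depends on the classifiers, not only on $f$. Fourth, $M$ server steps with their own step size can move $\mathbf{s}_{i}^{c}$ by an amount unrelated to $\|\phi_{(r+1)E}-\phi_{rE}\|_{2}$. The link therefore has to be an extra hypothesis.

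Your Lipschitz constants are also off. \eqref{eq:eq13} is $1$-Lipschitz in $\ell_{1}$, hence only $\sqrt{d}$-Lipschitz in $\ell_{2}$. \eqref{eq:eq12} carries a factor $1/\tau$, and the cosine normalization contributes factors $1/\|z_{i}\|_{2}$ and $1/\|\mathbf{s}_{j}^{c}\|_{2}$ that no assumption bounds. The margin $d_{k}$ moves as well, and the positive class and $\mathcal{H}_{M}^{c}$ add terms beyond $\mathbb{C}-1$.

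Finally, like the paper, you treat the model as unchanged across the boundary. But Algorithm~\ref{alg:ours} resets $w_{k}$ to the aggregate $\sum_{k}\frac{n_{k}}{N}w_{k}$, so $\mathcal{L}_{CE}$ changes there too. That contributes a client-drift term which the stated bound does not contain.
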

where $\eta$ is the learning rate, $\mathbb{C}$ is the number of classes, $\{L_{1},L_{2},B,\sigma\}$ are constants in the above assumptions (\cref{app:b2}). \cref{app:mytheorem1} gives the deviation bound of $\mathcal{L}_{r}$, and the convergence of FedHPro can be guaranteed by choosing an appropriate learning rate $\eta$ during the training process.
\begin{theorem}[Non-convex convergence for FedHPro] \label{app:mytheorem2}
    Let the assumptions hold, the objective function $\mathcal{L}$ of an arbitrary client decreases monotonically as the round increases, with the following condition,
    \begin{equation}
    \eta_{\mathbf{e}} < \frac{2B^{2}-2L_{2}B(\mathbb{C}-1)}{L_{1}(\sigma^{2}+B^{2})}, \quad \mathbf{e} \in \{1, \ldots, E-1\}.
    \end{equation}
\end{theorem}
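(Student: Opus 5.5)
The plan is to derive the statement directly from the per-round deviation bound of \cref{app:mytheorem1}. Monotone decrease of $\mathcal{L}$ across rounds means the right-hand side of that bound is strictly below $\mathcal{L}_{rE}$. So I would first restate \cref{app:mytheorem1} as
\begin{equation*}
\mathbb{E}[\mathcal{L}_{(r+1)E}] - \mathcal{L}_{rE} \leq -\eta E B^{2} + \frac{L_{1}E\eta^{2}}{2}(B^{2}+\sigma^{2}) + L_{2}E\eta B(\mathbb{C}-1),
\end{equation*}
obtained by expanding $(\eta - L_{1}\eta^{2}/2)EB^{2}$ and grouping the two $\eta^{2}$ terms. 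It then suffices to make the right-hand side strictly negative.

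Dividing by $E\eta>0$, the condition becomes
\begin{equation*}
-B^{2} + \frac{L_{1}\eta}{2}(B^{2}+\sigma^{2}) + L_{2}B(\mathbb{C}-1) < 0 .
\end{equation*}
This is linear in $\eta$. Solving it gives exactly
\begin{equation*}
\eta < \frac{2B^{2}-2L_{2}B(\mathbb{C}-1)}{L_{1}(\sigma^{2}+B^{2})}.
\end{equation*}
For this to be a nonempty range of positive step sizes we need $B > L_{2}(\mathbb{C}-1)$, and I would state this as an implicit requirement. Since the per-step analysis underlying \cref{app:mytheorem1} telescopes over local steps $\mathbf{e}=1,\ldots,E-1$, I would apply the bound with a step-dependent $\eta_{\mathbf{e}}$. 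Requiring the inequality for every $\mathbf{e}$ makes each local step contribute a negative expected change, and hence the round-level change is negative.

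The one delicate point is justifying that the per-step bound holds uniformly with $\eta_{\mathbf{e}}$ in place of a fixed $\eta$. I would re-trace the one-step smoothness inequality \eqref{eq:app2}. The gradient-norm term is handled via \cref{assumption3}, where the paper's bound effectively treats $\|\nabla\mathcal{L}\|^{2}$ as of order $B^{2}$. The variance term comes from \cref{assumption2}. The hyper-prototype shift is controlled by \cref{assumption4}: the $\mathbb{C}-1$ negative classes in $\mathcal{L}_{HPCL}$ each contribute at most $L_{2}\eta B$ when the extractor moves by $\eta$ times a gradient of norm at most $B$. With these bounds, replacing $\eta$ by $\eta_{\mathbf{e}}$ at each step and summing yields the claim, so I expect no further obstacle beyond this bookkeeping.
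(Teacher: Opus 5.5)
Your argument matches the paper's proof: you restate \cref{app:mytheorem1} as a bound on $\mathbb{E}[\mathcal{L}_{(r+1)E}]-\mathcal{L}_{rE}$, require the right-hand side to be negative, divide by $E\eta$, and solve the resulting linear inequality in $\eta$, and your observation that the admissible range is nonempty only when $B > L_{2}(\mathbb{C}-1)$ is a correct addition the paper leaves implicit. One caution about your re-tracing with step-dependent $\eta_{\mathbf{e}}$: replacing $\|\nabla\mathcal{L}\|_{2}^{2}$ by $B^{2}$ in a term whose coefficient $\frac{L_{1}}{2}\eta^{2}-\eta$ is negative requires a \emph{lower} bound on the gradient norm, whereas \cref{assumption3} only supplies an upper bound, so that step (which you inherit from the paper's proof of \cref{app:mytheorem1}) does not follow from the stated assumptions.
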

\begin{theorem}[Non-convex convergence rate for FedHPro] \label{app:mytheorem3}
    Let the assumptions hold, the round $r$ from $0$ to $R-1$, given any $\boldsymbol{\varepsilon} > 0$, the objective function $\mathcal{L}$ will converge when,
    \begin{equation}
            R > \frac{2}{E\eta} \cdot \frac{\mathcal{L}_{0} - \min(\mathcal{L}^{*})}{2\boldsymbol{\varepsilon} - L_{1}\eta(\boldsymbol{\varepsilon} + \sigma^{2}) - 2L_{2}B(\mathbb{C}-1)},
    \end{equation}
    and we can further get the following condition for the learning rate $\eta$ as,
    \begin{equation}
        \eta < \frac{2\boldsymbol{\varepsilon} - 2L_{2}B(\mathbb{C}-1)}{L_{1}(\boldsymbol{\varepsilon} + \sigma^{2})},
    \end{equation}
\end{theorem}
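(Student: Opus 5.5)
We derive the rate of \cref{app:mytheorem3} by telescoping the one-round deviation bound of \cref{app:mytheorem1}, with $B^{2}$ replaced by a stationarity level $\boldsymbol{\varepsilon}$. The structure is the standard non-convex SGD argument: one-step descent, then a contribution from refreshing the hyper-prototypes, then a sum over rounds.

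\textbf{Step 1 (local descent).} Within round $r$ the hyper-prototypes $\mathcal{S}_{M}$ are fixed, so $\mathcal{L}$ is an ordinary smooth objective in $w_{k}$. Apply the quadratic bound \cref{eq:app2} to the update $w_{rE+\mathbf{e}+1}=w_{rE+\mathbf{e}}-\eta\nabla\mathcal{L}(w_{rE+\mathbf{e}},\xi)$ and take expectation. Unbiasedness \cref{eq:app3} turns the linear term into $-\eta\|\nabla\mathcal{L}_{rE+\mathbf{e}}\|^{2}$. The variance bound \cref{eq:app4} gives $\mathbb{E}\|\nabla\mathcal{L}(w,\xi)\|^{2}\le\|\nabla\mathcal{L}\|^{2}+\sigma^{2}$. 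Together these yield
\begin{equation*}
\mathbb{E}[\mathcal{L}_{rE+\mathbf{e}+1}]\le\mathcal{L}_{rE+\mathbf{e}}-\Bigl(\eta-\tfrac{L_{1}\eta^{2}}{2}\Bigr)\|\nabla\mathcal{L}_{rE+\mathbf{e}}\|^{2}+\tfrac{L_{1}\eta^{2}}{2}\sigma^{2}.
\end{equation*}
Summing over $\mathbf{e}=1,\dots,E$ gives the first two terms of \cref{app:mytheorem1}, with $\|\nabla\mathcal{L}\|^{2}$ kept in place of $B^{2}$.

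\textbf{Step 2 (aggregation and hyper-prototype refresh).} At $(r+1)E$ the server averages the models and updates $\mathcal{S}_{M}$, which changes the objective itself. Only $\mathcal{L}_{HPCL}$ and $\mathcal{L}_{HPAL}$ depend on the prototypes and on the embeddings $z=f_{k}(x)$. We would bound the jump $\mathcal{L}_{(r+1)E+1}-\mathcal{L}_{(r+1)E}$ as follows:
\begin{itemize}
\item Use that these losses are Lipschitz in $z$: cosine similarity is bounded, the Huber penalty has slope at most $1$, and the softmax-type HPCL is $1/\tau$-Lipschitz, with the constant absorbed.
\item Apply \cref{assumption4} to get $\|f_{k,r_1}-f_{k,r_2}\|\le L_{2}\|\phi_{k,r_1}-\phi_{k,r_2}\|$.
\item Bound the parameter drift by $\sum_{\mathbf{e}}\eta\,\mathbb{E}\|\nabla\mathcal{L}(w,\xi)\|\le E\eta B$ using \cref{eq:app5}.
\end{itemize}
The HPCL sum runs over the $\mathbb{C}-1$ negative classes, which produces the factor $(\mathbb{C}-1)$. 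This gives the extra term $L_{2}E\eta B(\mathbb{C}-1)$.

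\textbf{Step 3 (telescoping).} Combining Steps 1 and 2, rearrange and sum over $r=0,\dots,R-1$. Use $\mathcal{L}_{RE}\ge\min(\mathcal{L}^{*})$ to obtain
\begin{equation*}
\frac{1}{RE}\sum_{r,\mathbf{e}}\mathbb{E}\|\nabla\mathcal{L}_{rE+\mathbf{e}}\|^{2}\le\frac{2(\mathcal{L}_{0}-\min(\mathcal{L}^{*}))}{RE\eta(2-L_{1}\eta)}+\frac{L_{1}\eta\sigma^{2}+2L_{2}B(\mathbb{C}-1)}{2-L_{1}\eta}.
\end{equation*}
Requiring the right side to be below $\boldsymbol{\varepsilon}$ and solving for $R$ gives exactly
\begin{equation*}
R>\frac{2}{E\eta}\cdot\frac{\mathcal{L}_{0}-\min(\mathcal{L}^{*})}{2\boldsymbol{\varepsilon}-L_{1}\eta(\boldsymbol{\varepsilon}+\sigma^{2})-2L_{2}B(\mathbb{C}-1)}.
\end{equation*}
This bound is meaningful only when the denominator is positive, which is the stated condition $\eta<\frac{2\boldsymbol{\varepsilon}-2L_{2}B(\mathbb{C}-1)}{L_{1}(\boldsymbol{\varepsilon}+\sigma^{2})}$.

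\textbf{Main obstacle.} The delicate part is Step 2. It requires a clean constant for how much $\mathcal{L}_{HPCL}+\mathcal{L}_{HPAL}$ shifts when the feature extractor and the hyper-prototypes change between rounds, with unit Lipschitz factors so that exactly $L_{2}B(\mathbb{C}-1)$ appears. The server's gradient-matching step is not directly controlled by the assumptions. I would first try to treat $\mathcal{S}_{M}$ as held fixed within the objective's Lipschitz budget and charge all movement to the embeddings via \cref{assumption4}. I would state explicitly that the per-class Lipschitz constant of the contrastive term is normalized to $1$.
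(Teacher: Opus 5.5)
Your proposal is essentially the paper's proof. The paper starts from \eqref{eq:app20}, i.e.\ the per-round bound of \cref{app:mytheorem1} with the gradient norms kept rather than replaced by $B^{2}$, assembled from the local-descent telescoping \eqref{eq:app13} and the round-jump bound \eqref{eq:app16}; it then rearranges, sums over $r$, uses $\mathcal{L}_{R}\ge\min(\mathcal{L}^{*})$, and reads off the condition on $R$ and then on $\eta$ from positivity of the denominator, exactly as in your Steps 1--3. Your Step~2 is no less rigorous than the paper's counterpart \eqref{eq:app14}, which gets $L_{2}E\eta B(\mathbb{C}-1)$ only by assuming the CE and HPAL terms do not change, discarding $\tau$ and $d_{k}$, and removing the dependence on $\mathcal{S}_{M}$ through $s(z,\mathcal{S}_{M}^{j})\le\|z\|_{2}$ and the reverse triangle inequality, so to reach the stated constant you should likewise drop the HPAL jump rather than bound it separately.
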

where $\min(\mathcal{L}^{*})$ denotes the optimal solution of $\mathcal{L}$. \cref{app:mytheorem3} ensures that the expected $L_{2}$-norm of the gradients can be confined to any bound $\boldsymbol{\varepsilon}$, and also provides the convergence rate of FedHPro: a smaller $\boldsymbol{\varepsilon}$ requires a larger round $R$, which means that the tighter the bound is, the more communication round $R$ is required to reach in FedHPro.

\subsubsection{Proof of \cref{app:mytheorem1}}
\begin{proof}[Proof of \cref{app:mytheorem1}] \label{proof1}
\cref{app:mytheorem1} is suitable to each client in FedHPro, so we omit the client's notation $k$ and define the gradient descent as $w_{r+1} = w_{r} - \eta \nabla \mathcal{L}_{r}(w_{r}) = w_{r} - \eta \theta_{r}$, then we can naturally get:
\begin{equation}
    \begin{aligned}
            \mathcal{L}_{rE+2} 
            & \overset{(1)}{\leq} \mathcal{L}_{rE+1} + (\nabla\mathcal{L}_{rE+1})^{\top}(w_{rE+2} - w_{rE+1}) + \frac{L_{1}}{2} \| w_{rE+2} - w_{rE+1} \|_{2}^{2} \\
            & = \mathcal{L}_{rE+1} - \eta (\nabla\mathcal{L}_{rE+1})^{\top} \theta_{rE+1} + 
                \frac{L_{1}}{2} \| \eta \theta_{rE+1} \|_{2}^{2},
    \end{aligned}
\end{equation}
where $(1)$ from $L_{1}$-Lipschitz quadratic bound of the above \cref{assumption1}. 
We execute the expectation on both sides:
\begin{equation}
\centering
    \begin{aligned}
    \mathbb{E}[\mathcal{L}_{rE+2}] 
            & \leq \mathcal{L}_{rE+1} - \eta \mathbb{E}[(\nabla\mathcal{L}_{rE+1})^{\top} \theta_{rE+1}] + \frac{L_{1}}{2} \eta^{2} \mathbb{E}[\| \theta_{rE+1} \|_{2}^{2}] \\
            & \overset{(1)}{=} \mathcal{L}_{rE+1} - \eta \mathbb{E}[\| \nabla\mathcal{L}_{rE+1} \|_{2}^{2}] + \frac{L_{1}}{2} \eta^{2} \mathbb{E}[\| \theta_{rE+1} \|_{2}^{2}] \\
            & \overset{(2)}{\leq} \mathcal{L}_{rE+1} - \eta \mathbb{E}[\| \nabla\mathcal{L}_{rE+1} \|_{2}^{2}] + \frac{L_{1}}{2} \eta^{2} (\mathbb{E}^{2}[\| \nabla\mathcal{L}_{rE+1} \|_{2}] + \texttt{Variance} [\| \theta_{rE+1} \|_{2}] ) \\
            & = \mathcal{L}_{rE+1} - \eta \| \nabla\mathcal{L}_{rE+1} \|_{2}^{2} + \frac{L_{1}}{2} \eta^{2} 
                    (\| \nabla\mathcal{L}_{rE+1} \|_{2}^{2} + \texttt{Variance} [\| \theta_{rE+1} \|_{2}]) \\
            & = \mathcal{L}_{rE+1} + (\frac{L_{1}}{2} \eta^{2} - \eta) \| \nabla\mathcal{L}_{rE+1} \|_{2}^{2} + \frac{L_{1}}{2} \eta^{2} \cdot \texttt{Variance} [\| \theta_{rE+1} \|_{2}] \\
            & \overset{(3)}{\leq} \mathcal{L}_{rE+1} + (\frac{L_{1}}{2} \eta^{2} - \eta) 
                    \|\nabla\mathcal{L}_{rE+1}\|_{2}^{2} + \frac{L_{1}}{2} \eta^{2} \sigma^{2},
    \end{aligned}
\end{equation}
where $(1)$ from the stochastic gradient is unbiased in \cref{eq:app3}; 
$(2)$ from the formula: $\texttt{Variance}(x)=\mathbb{E}[x^{2}]-\mathbb{E}^{2}[x]$; 
and $(3)$ from \cref{eq:app4} of \cref{assumption2}. 
The $rE+1$ means the time between the server and client, and $rE+2$ means the first step. On this basis, we telescope $E$ steps on both sides of the above equation as:
\begin{equation} \label{eq:app13}
            \mathbb{E}[\mathcal{L}_{(r+1)E}] \leq \mathcal{L}_{rE+1} + (\frac{L_{1}}{2} \eta^{2} - \eta) 
            \sum_{\mathbf{e}=1}^{E-1} \|\nabla\mathcal{L}_{rE+\mathbf{e}}\|_{2}^{2} + \frac{L_{1}}{2}E\eta^{2}\sigma^{2}.
\end{equation}
Then, we consider the relationship between $\mathcal{L}_{rE+1}$ and $\mathcal{L}_{rE}$, 
that is, given any sample $\xi$, based on \cref{eq:app13}, we consider the relationship between $\mathcal{L}_{(r+1)E+1}$ and $\mathcal{L}_{(r+1)E}$, expressed as follows:
\begin{flalign*}
\begin{aligned}
\centering
    \mathcal{L}_{(r+1)E+1} 
        = & \; (\mathcal{L}_{CE,(r+1)E+1} + \mathcal{L}_{HPCL,(r+1)E+1} + \mathcal{L}_{HPAL,(r+1)E+1}) + \mathcal{L}_{(r+1)E} - \mathcal{L}_{(r+1)E} \\
        = & \; (\mathcal{L}_{HPCL,(r+1)E+1} - \mathcal{L}_{HPCL,(r+1)E}) + \mathcal{L}_{(r+1)E} \\
        \overset{(1)}{\leq} & 
        \log(\textstyle\sum_{\mathcal{S}_{M}^{j} \in \mathcal{N}_{M,(r+1)E+1}^{c}} \exp(s(\xi, \mathcal{S}_{M}^{j}))) - \log(\textstyle\sum_{\mathcal{S}_{M}^{j} \in \mathcal{N}_{M,(r+1)E}^{c}} \exp(s(\xi, \mathcal{S}_{M}^{j}))) + \mathcal{L}_{(r+1)E} \\
        \overset{(2)}{\leq} & 
            \sum_{\mathcal{S}_{M}^{j} \in \mathcal{N}_{M}^{c}}(\| f_{k,(r+1)E}(\xi) \|_{2} - \| f_{k,rE}(\xi) \|_{2}) + \mathcal{L}_{(r+1)E}
\end{aligned}
\end{flalign*}
\begin{equation} \label{eq:app14}
    \begin{aligned}
    \centering
            \overset{(3)}{\leq} & \mathcal{L}_{(r+1)E} + \sum_{\mathcal{S}_{M}^{j} \in \mathcal{N}_{M}^{c}}(
                        \| f_{k,(r+1)E}(\xi) - f_{k,rE}(\xi) \|_{2}) \\
            \overset{(4)}{\leq} & \mathcal{L}_{(r+1)E} + (\mathbb{C}-1) \sum_{k=1}^{K} \frac{n_{k}}{N} (\| f_{k,(r+1)E}(\xi) - f_{k,rE}(\xi) \|_{2}) \\
            \overset{(5)}{\leq} & \mathcal{L}_{(r+1)E} +
                L_{2}(\mathbb{C}-1) \sum_{k=1}^{K}\frac{n_{k}}{N} \| \phi_{k,(r+1)E} - \phi_{k,rE} \|_{2} \\
            \overset{(6)}{\leq} & \mathcal{L}_{(r+1)E} + 
                L_{2}(\mathbb{C}-1) \sum_{k=1}^{K}\frac{n_{k}}{N} \| w_{k,(r+1)E} - w_{k,rE} \|_{2} \\
            = & \mathcal{L}_{(r+1)E} + L_{2}(\mathbb{C}-1) \sum_{k=1}^{K}\frac{n_{k}}{N} 
                \| \eta \sum_{\mathbf{e}=1}^{E-1} \theta_{k,rE+\mathbf{e}}\|_{2} \\
            = & \mathcal{L}_{(r+1)E} + L_{2}(\mathbb{C}-1) \eta \sum_{k=1}^{K}\frac{n_{k}}{N} 
                \| \sum_{\mathbf{e}=1}^{E-1} \theta_{k,rE+\mathbf{e}} \|_{2} \\
            \overset{(7)}{\leq} & \mathcal{L}_{(r+1)E} + L_{2}(\mathbb{C}-1) \eta \sum_{k=1}^{K}\frac{n_{k}}{N} 
                    \sum_{\mathbf{e}=1}^{E-1} \| \theta_{k,rE+\mathbf{e}} \|_{2},
    \end{aligned}
\end{equation}
where $(1)$ from the objective function of \cref{eq:eq12} of \cref{sec:sec41}; 
$(2)$ from \cref{eq:eq11}: $s(z,\mathcal{S}_{M}^{j}) = \frac{z}{|\mathcal{I}|}(\sum_{|\mathcal{I}|} \frac{1}{\|z\|_{2}} \cdot \frac{\mathcal{S}_{M}^{j}}{\|\mathcal{S}_{M}^{j}\|_{2}}) \leq z$, 
then we take expectation to get $\mathbb{E}[s(z,\mathcal{S}_{M}^{j})] \leq \|z\|_{2}$ with $z=f(\xi)$; 
$(3)$ from the formula as $\|x\|_{2} - \|y\|_{2} \leq \|x-y\|_{2}$; 
$(4)$ details from \cref{sec:background} and $\xi$ is the sample of the client dataset; 
$(5)$ from \cref{eq:app6} of \cref{assumption4}; 
$(6)$ from the the fact that $\phi$ is a subset of $w=\{\phi, \mu\}$; 
$(7)$ from the formula as $\| \sum{x} \|_{2} \leq \sum{\| x \|_{2}}$.

Subsequently, we perform the expectation on both sides of the above \cref{eq:app14}:
\begin{equation} \label{eq:app15}
        \begin{aligned}
        \centering
            \mathbb{E}[\mathcal{L}_{(r+1)E+1}] 
            & \leq \mathcal{L}_{(r+1)E} \; + L_{2}(\mathbb{C}-1) \eta \sum_{k=1}^{K}\frac{n_{k}}{N} \sum_{\mathbf{e}=1}^{E-1} \mathbb{E}[\| \theta_{k,rE+\mathbf{e}} \|_{2}] \\
            & \overset{(1)}{\leq} \mathcal{L}_{(r+1)E} + L_{2} E \eta B (\mathbb{C}-1),
    \end{aligned}
\end{equation}
where $(1)$ from \cref{assumption3}, and we can easily get:
\begin{equation} \label{eq:app16}
        \mathbb{E}[\mathcal{L}_{rE+1}] \leq \mathcal{L}_{rE} + L_{2} E \eta B (\mathbb{C}-1).
\end{equation}
Based on \cref{eq:app13} and \cref{eq:app16}, we can further get:
\begin{equation} \label{eq:app17}
        \begin{aligned}
        \centering
            \mathbb{E}[\mathcal{L}_{(r+1)E}]
            & \leq 
            \mathcal{L}_{rE} + L_{2} E \eta B (\mathbb{C}-1) + \frac{L_{1}}{2}E\eta^{2}\sigma^{2} + (\frac{L_{1}}{2} \eta^{2} - \eta) \sum_{\mathbf{e}=1}^{E-1} \|\nabla\mathcal{L}_{rE+\mathbf{e}}\|_{2}^{2} \\
            & \overset{(1)}{\leq} \mathcal{L}_{rE} - 
                \left(\eta - \frac{L_{1}}{2} \eta^{2}\right) EB^{2} + 
                \frac{L_{1}E\eta^{2}}{2}\sigma^{2} + L_{2} E \eta B (\mathbb{C}-1),
    \end{aligned}
\end{equation}
where $(1)$ from \cref{eq:app5} of \cref{assumption3}.

\cref{app:mytheorem1} indicates the deviation bound of the objective function for an arbitrary client after each round, and convergence can be guaranteed when there is a certain expected one-round decrease, which can be achieved by choosing an appropriate learning rate $\eta$. We have completed the proof of \cref{app:mytheorem1}.
\end{proof}

\subsubsection{Proof of \cref{app:mytheorem2}}
\begin{proof}[Proof of \cref{app:mytheorem2}] \label{proof2}
Based on \cref{app:mytheorem1}, we can get:
\begin{equation} \label{eq:app18}
\centering
        \mathbb{E}[\mathcal{L}_{(r+1)E}] - \mathcal{L}_{rE} \leq - \left( \eta - \frac{L_{1}}{2} \eta^{2} \right) EB^{2} + \frac{L_{1}E\eta^{2}}{2}\sigma^{2} + L_{2} E \eta B (\mathbb{C}-1).
\end{equation}
Then, to ensure the right side of the above \cref{eq:app18} $\leq 0$, \ie, $-(\eta - \frac{L_{1}}{2} \eta^{2}) EB^{2} + \frac{L_{1}E\eta^{2}}{2}\sigma^{2} + L_{2} E \eta B(\mathbb{C}-1) \leq 0$. 
We can easily get the following condition for the learning rate $\eta$ as:
\begin{equation} \label{eq:app19}
\centering
    \begin{aligned}
    \eta_{\mathbf{e}} < & \frac{2B^{2}-2L_{2}B(\mathbb{C}-1)}{L_{1}(\sigma^{2}+B^{2})}, \quad \mathbf{e} \in \{1,...,E-1\}.
    \end{aligned}
\end{equation}

Therefore, the convergence of the objective function $\mathcal{L}$ in FedHPro holds as communication round $r$ increases and the above limitation of the learning rate $\eta$ in \cref{eq:app19}. We have completed the proof of \cref{app:mytheorem2}.
\end{proof}

\subsubsection{Proof of \cref{app:mytheorem3}}
\begin{proof}[Proof of \cref{app:mytheorem3}] \label{proof3}
Based on \cref{eq:app17}, we can get:
\begin{equation} \label{eq:app20}
    \begin{aligned}
            \mathbb{E}[\mathcal{L}_{(r+1)E}] & \leq 
            \mathcal{L}_{rE} + L_{2} E \eta B(\mathbb{C}-1) + \frac{L_{1}}{2}E\eta^{2}\sigma^{2} + (\frac{L_{1}}{2} \eta^{2} - \eta) \sum_{\mathbf{e}=1}^{E-1} \|\nabla\mathcal{L}_{rE+\mathbf{e}}\|_{2}^{2}.
    \end{aligned}
\end{equation}
Further, we can naturally get the following equation as:
\begin{equation}
    \begin{aligned}
            \sum_{\mathbf{e}=1}^{E-1} \|\nabla\mathcal{L}_{rE+\mathbf{e}}\|_{2}^{2} 
            & \leq \frac{2L_{2} E \eta B(\mathbb{C}-1)}{2\eta - L_{1}\eta^{2}} \; + \frac{2(\mathcal{L}_{rE}-\mathbb{E}[\mathcal{L}_{(r+1)E}]) + L_{1}E\eta^{2}\sigma^{2}}{2\eta - L_{1}\eta^{2}}.
    \end{aligned}
\end{equation}

Let the round $r$ from $0$ to $R-1$, step $\mathbf{e}$ from $1$ to $E$, we can easily get the following equation as:
\begin{equation}
\centering
    \begin{aligned}
        \frac{1}{RE} \sum_{r=0}^{R-1} \sum_{\mathbf{e}=1}^{E-1} 
        \mathbb{E}[\|\nabla\mathcal{L}_{rE+\mathbf{e}}\|_{2}^{2}] 
        & \leq \frac{2\sum_{r=0}^{R-1}(\mathcal{L}_{rE}-\mathbb{E}[\mathcal{L}_{(r+1)E}])}
                {RE(2\eta - L_{1}\eta^{2})} + \frac{2L_{2} \eta B(\mathbb{C}-1)}{2\eta - L_{1}\eta^{2}} + \frac{L_{1} \eta^{2}\sigma^{2}}{2\eta - L_{1}\eta^{2}},
    \end{aligned}
\end{equation}
given any $\boldsymbol{\varepsilon} > 0$, $\mathcal{L}$ and $R$ receive the following limitation as:
\begin{equation} \label{eq:app23}
    \begin{aligned}
    {\boldsymbol{\varepsilon}} > \; & \frac{\frac{2}{RE}\sum_{r=0}^{R-1}(\mathcal{L}_{rE}-\mathbb{E}[\mathcal{L}_{(r+1)E}])}{2\eta - L_{1}\eta^{2}} \; + \frac{2L_{2} \eta B(\mathbb{C}-1) + L_{1}\eta^{2}\sigma^{2}}{2\eta - L_{1}\eta^{2}}.
    \end{aligned}
\end{equation}

Based on \cref{eq:app23}, we need to ensure that $\boldsymbol{\varepsilon} > 0$, so we can easily get the following condition for $R$ as:
\begin{equation} \label{eq:app24}
        \begin{aligned} 
            R > \frac{2 \left[ \mathcal{L}_{0} - \min(\mathcal{L}^{*}) \right]}{\underbrace{E\eta \left[ 2\boldsymbol{\varepsilon} - L_{1}\eta(\boldsymbol{\varepsilon} + \sigma^{2}) - 2L_{2}B(\mathbb{C}-1) \right]}_{\mathbf{T_1}}},
        \end{aligned}
\end{equation}
where $\min(\mathcal{L}^{*})$ denotes the optimal solution of the $\mathcal{L}$, \ie, $\sum_{r=0}^{R-1}(\mathcal{L}_{rE}-\mathbb{E}[\mathcal{L}_{(r+1)E}])=\mathcal{L}_{0} - \mathcal{L}_{1} + \mathcal{L}_{1} - \mathcal{L}_{2} + \cdots + \mathcal{L}_{R-1} - \mathcal{L}_{R} = \mathcal{L}_{0} - \mathcal{L}_{R} \leq \mathcal{L}_{0} - \min(\mathcal{L}^{*})$.

The denominator of the above \cref{eq:app24} should satisfy $\mathbf{T_1} > 0$, \ie:
\begin{equation} \label{eq:app25}
    2\boldsymbol{\varepsilon} - L_{1}\eta(\boldsymbol{\varepsilon} + \sigma^{2}) - 2L_{2}B(\mathbb{C}-1) > 0.
\end{equation}
Therefore, we can easily obtain the following condition for the learning rate $\eta$ as:
\begin{equation} \label{eq:app26}
    \eta < \frac{2\boldsymbol{\varepsilon} - 2L_{2}B(\mathbb{C}-1)}{L_{1}(\boldsymbol{\varepsilon} + \sigma^{2})}.
\end{equation}

\cref{app:mytheorem3} ensures that the expected $L_{2}$-norm of the gradients can be confined to any bound $\boldsymbol{\varepsilon}$, and also provides the convergence rate of FedHPro: \textit{a smaller $\boldsymbol{\varepsilon}$ requires a larger round $R$, which means that the tighter the bound is, the more communication round $R$ is required to reach in FedHPro}. We have completed the proof of \cref{app:mytheorem3}.
\end{proof}

\section{Experimental Details and Extra Results} \label{app:c}
In this section, we additionally provide more experimental details and results. 
All experiments are conducted on $64$-bit Ubuntu-$18.04$ system with a NVIDIA RTX A5000 GPU. 
Note that all reported results are the average of three repeating runs with a standard deviation of random seeds.
\subsection{Experimental Details} \label{app:c1}
\paragraph{Implementation Details.} 
We execute $R=100$ rounds and set the number of clients as $K=10$. We conduct local training for $E=10$ epochs in each communication round using the SGD optimizer with a learning rate $\eta$ of $0.01$, a momentum of $0.9$, a batch size of $64$, and a weight decay of $10^{-5}$. We apply data augmentation \cite{Cubuk_2019_CVPR} for the TinyImageNet dataset. 
The label skew heterogeneity level of clients is controlled by the standard deviation $\alpha$ of the Dirichlet distribution, and the quantity skew heterogeneity level is controlled by the index ratio $\rho$ between the sample sizes of the most frequent and the least frequent class. 
The lower $\alpha$ and higher $\rho$ these are, the more heterogeneous the clients are.

Regarding the crucial hyper-parameters of different baselines, we tune these hyper-parameters from a limited candidate set by grid search: the regularization coefficient of FedProx \cite{li2020fedprox} $\mu$ is tuned across $\{10^{-4}, 10^{-3}, 10^{-2}, 10^{-1}\}$ and is selected to be $\mu=10^{-2}$; 
the regularization coefficient of MOON \cite{li2021model} $\mu$ is tuned across $\{0.1, 1.0, 3.0, 5.0, 6.0\}$ and is selected to be $\mu=1.0$; 
the regularization coefficient of the FedProto \cite{tan2022fedproto} $\lambda$ is tuned across $\{1.0, 2.0, 3.0, 4.0\}$ and is selected to be $\lambda=1.0$; 
the temperature $\tau$ of FedRCL \cite{seo2024relaxed} is selected as $\tau=0.05$, and the loss coefficient $\beta$ is selected to be $\beta=1.0$; 
the number of prototype training epochs $S$ in FedTGP \cite{zhang2024fedtgp} is selected as $S=100$, the loss ratio $\lambda$ is tuned across $\{0.1, 0.5, 1.0, 2.0\}$ and is selected to be $\lambda=1.0$; 
the loss ratio $\lambda$ of FedGMKD \cite{zhang2024fedgmkd} is tuned across $\{0.2, 0.4, 0.6, 0.8, 1.0\}$ and is selected as $\lambda=0.6$, and the responsibility ratio $\gamma$ is selected to be $\gamma=0.5$; 
the loss ratios $\lambda_{1}, \lambda_{2}, \lambda_{3}$ of FedSA \cite{zhou2025fedsa} are tuned across $\{0.1, 0.2, 0.3, 0.6, 0.8, 0.9, 1.0\}$, and are selected as $\lambda_{1}=0.1, \lambda_{2}=0.9, \lambda_{3}=1.0$ for our run experiments.

\paragraph{Datasets and Models.} 
CIFAR10 and CIFAR100 \cite{cifar2009krizhevsky} both have $50,000$ training samples and $10,000$ test samples, and each sample is a $32*32$ color image, where CIFAR10 has $10$ categories, and CIFAR100 has $100$ categories. TinyImageNet \cite{le2015tiny} contains $200$ categories and has $100,000$ training samples and $10,000$ testing samples, and each image's size is $64*64$. 
HAM10000 \cite{tschandl2018ham10000} is used for skin lesion classification and contains $8,912$ training samples and $1,103$ testing samples with $7$ categories, and each sample's size is scaled to $224*224$. 
Then, based on \cite{caoKaidi2019}, we further build CIFAR10-LT, CIFAR100-LT, and TinyImageNet-LT datasets, as the unbalanced datasets with long-tailed level $\rho$, $\rho=1$ denotes globally balanced, and $\rho=100$ as most imbalanced: where category $0$ has $5000$ samples, while category $9$ only has $50$ samples. 
For the domain-skewed FL scenario, we evaluate on two popular multi-domain datasets: Digits \cite{peng2019moment} includes four domains: MNIST (M), USPS (U), SVHN (SV), and SYN (SY) with $10$ categories, the digit number from $0$ to $9$. Office-Caltech \cite{gong2012geodesic} consists of four domains: Caltech (C), Webcam (W), Amazon (A), and DSLR (D), each domain with $10$ overlapping categories.

We use MobileNetV2 \cite{sandler2018mobilenetv2} for TinyImageNet and TinyImageNet-LT, and ResNet-10 \cite{he2016deep} for the other datasets. The feature dimension $d$ is set as $512$ by default. Note that all compared methods employ the same network architecture for fair comparison in different heterogeneous FL scenarios.

\begin{figure}[H]
    \centering
    \includegraphics[width=0.6\linewidth]{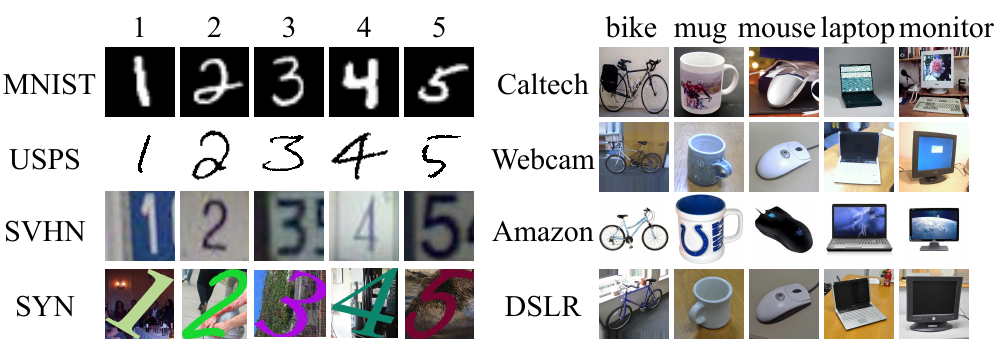}
    \caption{Examples of the raw instances from the datasets: Digits (\textbf{Left}) and Office-Caltech (\textbf{Right}), each has four domains.}
    \label{fig:app1} \vspace{-2mm}
\end{figure}

\paragraph{Federated Learning with Data Heterogeneity.} 
Under heterogeneous FL scenarios, the label distribution $P(y)$, the sample size of different categories, and the conditional feature distribution $P(x|y)$ may vary across participating clients in FL, resulting in different data heterogeneity:
\begin{itemize}
    \item \textbf{Label Skew}: $P_{k_{1}}(y) \neq P_{k_{2}}(y)$, $P_{k_{1}}(x|y)=P_{k_{2}}(x|y)$, \textit{i.e.}, the label distributions are different among clients, but the features of data samples are similar for each client.
    \item \textbf{Quantity Skew}: A common assumption in quantity skew is that categories are sorted in non-increasing order, \textit{i.e.}, if label $c_{1} < c_{2}$, we have the global sample size $n^{c_{1}} \ge n^{c_{2}}$.
    \item \textbf{Domain Skew}: $P_{k_{1}}(x|y) \neq P_{k_{2}}(x|y)$, $P_{k_{1}}(y)=P_{k_{2}}(y)$, \textit{i.e.}, for the same category space, there are different feature distributions between participating clients.
\end{itemize}

\paragraph{Federated Scenarios.} 
We consider the three heterogeneity settings: 1) \textbf{Label Skew}: $\mathtt{NID1}_{\alpha}$ follows Dirichlet distribution \cite{wang2020fedma} ($\alpha$ as the non-IID level), $\mathtt{NID2}$ is a more extreme setting consists of $6$ biased clients (each has a single class) and $1$ client has all classes. 
2) \textbf{Quantity Skew}: we shape the original dataset into a long-tailed distribution by \cite{caoKaidi2019}, and $\rho$ means the ratio between sample sizes of the most frequent and lowest frequent class. 
3) \textbf{Domain Skew}: we set $20$ and $10$ clients and randomly assign these domains for Digits (M: $3$, U: $7$, SV: $6$, SY: $4$) and Office-Caltech (C: $3$, W: $1$, A: $2$, D: $4$), and each client's dataset is randomly sampled as $1\%$ of the total size for Digits and $20\%$ for Office-Caltech. The domain skew in FL is visualized in \cref{fig:app1} on the Digits and Office-Caltech datasets.

\paragraph{Compared Baselines.} \label{par:compare} 
We compare our FedHPro with three types of baselines: 
1) Vallina FL with data heterogeneity: FedAvg \cite{mcmahan2017communication}, FedProx \cite{li2020fedprox}; 
2) Contrastive-based FL baselines: MOON \cite{li2021model}, FedRCL \cite{seo2024relaxed}; 
3) Prototype-based FL baselines: FedSA \cite{zhou2025fedsa}, FedProto \cite{tan2022fedproto}, FedTGP \cite{zhang2024fedtgp}, and FedGMKD \cite{zhang2024fedgmkd}.

\textit{Prototype-based FL Baselines}: Among these FL methods, FedProto \cite{tan2022fedproto}, FedTGP \cite{zhang2024fedtgp}, FedGMKD \cite{zhang2024fedgmkd}, and FedSA \cite{zhou2025fedsa} are popular prototype-based FL methods. However, as shown in \cref{tab:pro_only}, implementing FedProto, FedTGP, and FedSA following their original prototype-only communication setting consistently leads to poor performance. 
To provide a stronger and fairer reference point, in our experiments, we report \textit{enhanced variants} that transmit prototypes together with model parameters (denoted as $\text{FedProto}^{\bm{\dagger}}$, $\text{FedTGP}^{\bm{\dagger}}$, and $\text{FedSA}^{\bm{\dagger}}$ in our tables). We emphasize that these enhanced variants $\bm{\dagger}$ are not the original algorithms' intended communication protocol: they are upper-bound implementations used solely to decouple effectiveness from communication constraints and to ensure that accuracy comparisons are not dominated by a weaker transmission budget.

\begin{table*}[b!]
\caption{Results on CIFAR10, CIFAR100, HAM10000, TinyImageNet with \textbf{Label Skew}. $\bm{\dagger}$ denotes the results obtained by exchanging both the prototypes and model parameters during the FL training process. Best results are in \textcolor{DeepRed}{\textbf{red bold}}, with second best in \textcolor{blue}{\textbf{blue bold}}.}
\label{tab:app1}
\centering
    \resizebox{\linewidth}{!}{
    \renewcommand\arraystretch{1.16}
    \setlength{\arrayrulewidth}{0.3mm}
        \begin{tabular}{l||ccc|ccc|ccc|ccc}
        \hline\thickhline
        \rowcolor{mygray} 
        & \multicolumn{3}{c I}{\textsc{\textbf{CIFAR10}}} & \multicolumn{3}{c I}{\textsc{\textbf{HAM10000}}} 
        & \multicolumn{3}{c I}{\textsc{\textbf{TinyImageNet}}} & \multicolumn{3}{c}{\textsc{\textbf{CIFAR100}}} \\
        \cline{2-4} \cline{5-7} \cline{8-10} \cline{11-13}
        \rowcolor{mygray} 
        \multirow{-2}{*}{\centering \textsc{\textbf{Methods}}} 
        & $\mathtt{NID1}_{0.2}$ & $\mathtt{NID1}_{0.5}$ & $\mathtt{NID2}$ 
        & $\mathtt{NID1}_{0.2}$ & $\mathtt{NID1}_{0.5}$ & $\mathtt{NID2}$ 
        & $\mathtt{NID1}_{0.2}$ & $\mathtt{NID1}_{0.5}$ & $\mathtt{NID2}$ 
        & $\mathtt{NID1}_{0.2}$ & $\mathtt{NID1}_{0.5}$ & $\mathtt{NID2}$ \\
        \hline\hline
        FedAvg~\pub{AISTATS'17} 
            & 80.59 & 85.60 & 74.48 
            & 45.26 & 48.68 & 41.54 
            & 41.34 & 43.88 & 36.20 
            & 60.08 & 62.48 & 50.56 \\
        FedProx~\pub{MLSys'20} 
            & 80.74 & 85.54 & 74.67 
            & 45.42 & 48.55 & 41.21 
            & 41.16 & 43.50 & 36.62 
            & 60.21 & 62.53 & 50.29 \\
        MOON~\pub{CVPR'21} 
            & 82.04 & 86.95 & 76.07 
            & 47.33 & 50.24 & 43.10 
            & 43.22 & 44.36 & 37.15 
            & 61.48 & 63.38 & 51.81 \\
        $\text{FedProto}^{\bm{\dagger}}$~\pub{AAAI'22} 
            & 81.61 & 86.25 & 75.32 
            & 46.35 & 49.70 & 42.45 
            & 42.25 & 44.19 & 36.86 
            & 60.26 & 62.91 & 50.60 \\
        $\text{FedTGP}^{\bm{\dagger}}$~\pub{AAAI'24} 
            & 84.14 & 87.31 & \textcolor{blue}{\textbf{77.95}}
            & 48.29 & 50.77 & 44.26 
            & 43.92 & 45.13 & 38.21 
            & 62.45 & 64.10 & 52.31 \\
        FedGMKD~\pub{NeurIPS'24} 
            & 83.56 & \textcolor{blue}{\textbf{88.09}} & 77.29 
            & 47.97 & 50.49 & 44.20 
            & 44.09 & 45.30 & 37.94 
            & 62.76 & 64.62 & 53.05 \\
        FedRCL~\pub{CVPR'24} 
            & 83.80 & 87.76 & 77.51 
            & \textcolor{blue}{\textbf{48.60}} & 51.10 & 44.36 
            & \textcolor{blue}{\textbf{44.30}} & 45.24 & \textcolor{blue}{\textbf{38.76}}
            & 62.97 & 64.87 & \textcolor{blue}{\textbf{53.29}} \\
        $\text{FedSA}^{\bm{\dagger}}$~\pub{AAAI'25} 
            & \textcolor{blue}{\textbf{84.27}} & 87.93 & 77.86 
            & 48.45 & \textcolor{blue}{\textbf{51.28}} & \textcolor{blue}{\textbf{44.85}}
            & 44.18 & \textcolor{blue}{\textbf{45.56}} & 38.35 
            & \textcolor{blue}{\textbf{63.22}} & \textcolor{blue}{\textbf{65.06}} & 52.85 \\
        \hline
        \rowcolor{lightyellow} FedHPro (Ours) 
            & \textcolor{DeepRed}{\textbf{85.98}} & \textcolor{DeepRed}{\textbf{89.56}} & \textcolor{DeepRed}{\textbf{79.70}} 
            & \textcolor{DeepRed}{\textbf{50.23}} & \textcolor{DeepRed}{\textbf{52.79}} & \textcolor{DeepRed}{\textbf{46.17}} 
            & \textcolor{DeepRed}{\textbf{45.64}} & \textcolor{DeepRed}{\textbf{46.90}} & \textcolor{DeepRed}{\textbf{40.52}} 
            & \textcolor{DeepRed}{\textbf{64.41}} & \textcolor{DeepRed}{\textbf{66.25}} & \textcolor{DeepRed}{\textbf{55.08}} \\
        \hline\thickhline
        \end{tabular}}
\end{table*}

\subsection{Additional Experiments} \label{app:c2}
\paragraph{Comparison Results under Label Skew.} 
The comparison results for FedHPro and compared FL baselines with different heterogeneity settings ($\mathtt{NID1}_{\alpha}$ and $\mathtt{NID2}$) are shown in \cref{tab:app1}. We observe that FedHPro universally outperforms other FL baselines, implying that FedHPro can effectively leverage the resulting hyper-prototypes learned from real samples to enhance model training. 
Experiments show that: 
1) FedHPro consistently outperforms others in various non-IID settings, showing that FedHPro is robust to different heterogeneity levels, even in the more difficult $\mathtt{NID2}$ setting; 
2) Built upon the HPCL and HPAL modules, FedHPro enhances inter-class separability and enforces intra-class uniformity in local training under heterogeneous FL, yielding improvements over FL baselines.

\begin{table*}[t]
\caption{Results on CIFAR10-LT, CIFAR100-LT, TinyImageNet-LT with \textbf{Quantity Skew}. $\bm{\dagger}$ denotes the results obtained by exchanging both the prototypes and model parameters during the FL training process. Best results are in \textcolor{DeepRed}{\textbf{red bold}}, with second best in \textcolor{blue}{\textbf{blue bold}}.}
\label{tab:app2}
\centering
    \resizebox{\linewidth}{!}{
    \setlength\tabcolsep{10pt}
    \renewcommand\arraystretch{1.12}
    \setlength{\arrayrulewidth}{0.3mm}
        \begin{tabular}{l||ccc|ccc|ccc}
        \hline\thickhline
        \rowcolor{mygray} 
        & \multicolumn{3}{c I}{\textsc{\textbf{CIFAR10-LT}} ($\mathtt{NID1}_{0.5}$)} & \multicolumn{3}{c I}{\textsc{\textbf{CIFAR100-LT}} ($\mathtt{NID1}_{0.5}$)} 
        & \multicolumn{3}{c}{\textsc{\textbf{TinyImageNet-LT}} ($\mathtt{NID1}_{0.5}$)} \\
        \cline{2-4} \cline{5-7} \cline{8-10}
        \rowcolor{mygray} 
        \multirow{-2}{*}{\centering \textsc{\textbf{Methods}}} 
        & $\rho=10$ & $\rho=50$ & $\rho=100$ 
        & $\rho=10$ & $\rho=50$ & $\rho=100$ 
        & $\rho=10$ & $\rho=50$ & $\rho=100$ \\
        \hline\hline
        FedAvg~\pub{AISTATS'17} 
            & 75.54 & 69.05 & 60.79 
            & 53.23 & 45.06 & 34.59 
            & 34.89 & 25.14 & 21.96 \\
        FedProx~\pub{MLSys'20} 
            & 75.15 & 69.23 & 60.86 
            & 52.85 & 44.67 & 34.68 
            & 34.93 & 25.18 & 21.65 \\
        MOON~\pub{CVPR'21} 
            & 76.11 & 69.70 & 60.91 
            & 53.58 & 45.60 & 35.30 
            & 35.29 & 25.78 & 22.13 \\
        $\text{FedProto}^{\bm{\dagger}}$~\pub{AAAI'22} 
            & 75.81 & 69.43 & 60.37 
            & 53.13 & 45.20 & 34.83 
            & 34.40 & 25.23 & 21.19 \\
        $\text{FedTGP}^{\bm{\dagger}}$~\pub{AAAI'24} 
            & 76.27 & 71.36 & 61.94 
            & 54.17 & 46.04 & 35.59 
            & 35.51 & 26.16 & 22.70 \\
        FedGMKD~\pub{NeurIPS'24} 
            & \textcolor{blue}{\textbf{76.83}} & 71.21 & 62.13 
            & 54.49 & 46.25 & 35.85 
            & 35.74 & 26.39 & \textcolor{blue}{\textbf{23.25}} \\
        FedRCL~\pub{CVPR'24} 
            & 76.60 & 71.55 & 62.41 
            & 54.43 & \textcolor{blue}{\textbf{46.73}} & \textcolor{blue}{\textbf{36.27}} 
            & 36.10 & 26.54 & 22.91 \\
        $\text{FedSA}^{\bm{\dagger}}$~\pub{AAAI'25} 
            & 76.75 & \textcolor{blue}{\textbf{72.30}} & \textcolor{blue}{\textbf{62.48}} 
            & \textcolor{blue}{\textbf{54.77}} & 46.50 & 36.14 
            & \textcolor{blue}{\textbf{36.58}} & \textcolor{blue}{\textbf{26.93}} & 23.09 \\
        \hline
        \rowcolor{lightyellow} FedHPro (Ours) 
            & \textcolor{DeepRed}{\textbf{78.62}} & \textcolor{DeepRed}{\textbf{74.69}} & \textcolor{DeepRed}{\textbf{64.75}} 
            & \textcolor{DeepRed}{\textbf{55.86}} & \textcolor{DeepRed}{\textbf{47.67}} & \textcolor{DeepRed}{\textbf{37.52}} 
            & \textcolor{DeepRed}{\textbf{37.72}} & \textcolor{DeepRed}{\textbf{28.10}} & \textcolor{DeepRed}{\textbf{24.37}} \\
        \hline\thickhline
        \end{tabular}
    }
\end{table*}

\begin{table*}[t]
\caption{Results on the Digits and Office-Caltech with \textbf{Domain Skew}. $\bm{\dagger}$ denotes the results obtained by exchanging both the prototypes and model parameters during the FL training process. Best results are in \textcolor{DeepRed}{\textbf{red bold}}, with second best in \textcolor{blue}{\textbf{blue bold}}.}
\label{tab:app3}
\centering
    \resizebox{\linewidth}{!}{
    \setlength\tabcolsep{6.66pt}
    \renewcommand\arraystretch{1.18}
    \setlength{\arrayrulewidth}{0.3mm}
        \begin{tabular}{l||cccc|cc|cccc|cc}
        \hline\thickhline
        \rowcolor{mygray} 
        & \multicolumn{6}{c I}{\textsc{\textbf{Digits}}} & \multicolumn{6}{c}{\textsc{\textbf{Office-Caltech}}}  \\
        \cline{2-7} \cline{8-13} 
        \rowcolor{mygray} 
        \multirow{-2}{*}{\centering \textsc{\textbf{Methods}}} 
        & MNIST & USPS & SVHN & SYN & AVG $\pmb{\uparrow}$ & $\triangle$ 
        & Caltech & Webcam & Amazon & DSLR & AVG $\pmb{\uparrow}$ & $\triangle$ \\
        \hline\hline
        FedAvg~\pub{AISTATS'17} 
            & 96.68 & 90.43 & 76.35 & 51.85 & 78.82 & -- 
            & 60.71 & 48.90 & 75.79 & 36.28 & 55.42 & -- \\
        FedProx~\pub{MLSys'20} 
            & 96.61 & 89.92 & 76.80 & 52.69 & 79.01 & +0.19 
            & 60.77 & 50.83 & 77.27 & 37.61 & 56.62 & +1.20 \\
        MOON~\pub{CVPR'21} 
            & 95.90 & 91.15 & 76.64 & 43.29 & 76.74 & -2.08 
            & 57.75 & 46.08 & 72.83 & 34.50 & 52.79 & -2.63 \\
        $\text{FedProto}^{\bm{\dagger}}$~\pub{AAAI'22} 
            & 97.40 & 91.89 & 79.22 & 53.15 & 80.41 & +1.59 
            & 61.58 & 53.21 & 78.72 & 39.53 & 58.26 & +2.84 \\
        $\text{FedTGP}^{\bm{\dagger}}$~\pub{AAAI'24} 
            & 97.73 & 92.43 & 81.35 & 55.19 & 81.67 & +2.85 
            & 61.25 & 54.97 & 78.79 & 42.57 & 59.39 & +3.97 \\
        FedGMKD~\pub{NeurIPS'24} 
            & 97.80 & 92.26 & 81.13 & 56.17 & 81.83 & +3.01 
            & 61.78 & 55.36 & 79.30 & 43.22 & 59.91 & +4.49 \\
        FedRCL~\pub{CVPR'24} 
            & 96.71 & 91.32 & 77.45 & 53.20 & 79.67 & +0.85 
            & 60.29 & 50.41 & 74.54 & 37.46 & 55.68 & +0.26 \\
        $\text{FedSA}^{\bm{\dagger}}$~\pub{AAAI'25} 
            & 97.78 & 92.50 & 81.77 & 58.12 & \textcolor{blue}{\textbf{82.54}} & \textcolor{blue}{\textbf{+3.72}} 
            & 62.23 & 55.85 & 79.13 & 45.05 & \textcolor{blue}{\textbf{60.57}} & \textcolor{blue}{\textbf{+5.15}} \\
        \hline
        \rowcolor{lightyellow} FedHPro (Ours) 
            & 98.52 & 93.13 & 84.95 & 62.59 & \textcolor{DeepRed}{\textbf{84.80}} & \textcolor{DeepRed}{\textbf{+5.98}} 
            & 64.61 & 62.45 & 80.69 & 50.33 & \textcolor{DeepRed}{\textbf{64.52}} & \textcolor{DeepRed}{\textbf{+9.10}} \\
        \hline\thickhline
        \end{tabular}
    }
\end{table*}

\begin{table*}[t!]
\caption{Results of \textbf{Convergence Rates} on Digits and Office-Caltech, where $R_{acc}$ columns denote the minimum number of rounds required to reach $acc$ of the test accuracy, AVG columns show the final test accuracy, and $\triangle_{acc}$ represents the speedup ratio of rounds when FedAvg reaches $acc$.}
\label{tab:app4}
\centering
    \resizebox{\linewidth}{!}{
    \setlength\tabcolsep{6.56pt}
    \renewcommand\arraystretch{1.16}
    \setlength{\arrayrulewidth}{0.3mm}
        \begin{tabular}{l||cccc|c|c|cccc|c|c}
        \hline\thickhline
        \rowcolor{mygray} 
        & \multicolumn{6}{c I}{\textsc{\textbf{Digits}}} & \multicolumn{6}{c}{\textsc{\textbf{Office-Caltech}}} \\
        \cline{2-7} \cline{8-13} 
        \rowcolor{mygray} 
        \multirow{-2}{*}{\centering \textsc{\textbf{Methods}}} 
        & $R_{50\%}$ & $R_{60\%}$ & $R_{70\%}$ & $R_{75\%}$ & AVG $\pmb{\uparrow}$ & $\triangle_{75\%} \pmb{\uparrow}$ 
        & $R_{20\%}$ & $R_{30\%}$ & $R_{40\%}$ & $R_{50\%}$ & AVG $\pmb{\uparrow}$ & $\triangle_{50\%} \pmb{\uparrow}$ \\
        \hline\hline
        FedAvg~\pub{AISTATS'17} 
            & 8 & 10 & 16 & 27 & 78.82 & -- 
            & 3 & 5 & 7 & 25 & 55.42 & -- \\
        FedProx~\pub{MLSys'20} 
            & 8 & 10 & 15 & 30 & 79.01 & $\downarrow 11.1\%$ 
            & 3 & 6 & 8 & 26 & 56.62 & $\downarrow 4.0\%$ \\
        MOON~\pub{CVPR'21} 
            & 10 & 12 & 19 & 46 & 76.74 & $\downarrow 70.3\%$ 
            & 3 & 6 & 13 & 47 & 52.79 & $\downarrow 88.0\%$ \\
        $\text{FedProto}^{\bm{\dagger}}$~\pub{AAAI'22} 
            & 5 & 8 & 13 & 23 & 80.41 & $\uparrow 14.8\%$ 
            & 2 & 3 & 7 & 22 & 58.26 & $\uparrow 12.0\%$ \\
        $\text{FedTGP}^{\bm{\dagger}}$~\pub{AAAI'24} 
            & 5 & 8 & 11 & 18 & 81.67 & \textcolor{blue}{$\mathbf{\uparrow 33.3\%}$} 
            & 2 & 3 & 6 & 20 & 59.39 & $\uparrow 20.0\%$ \\
        FedGMKD~\pub{NeurIPS'24} 
            & 5 & 7 & 11 & 20 & 81.83 & $\uparrow 25.9\%$ 
            & 1 & 3 & 6 & 20 & 59.91 & $\uparrow 20.0\%$ \\
        FedRCL~\pub{CVPR'24} 
            & 8 & 12 & 20 & 35 & 79.67 & $\downarrow 29.6\%$ 
            & 4 & 6 & 15 & 37 & 55.68 & $\downarrow 48.0\%$ \\
        $\text{FedSA}^{\bm{\dagger}}$~\pub{AAAI'25} 
            & 5 & 7 & 11 & 18 & \textcolor{blue}{\textbf{82.54}} & \textcolor{blue}{$\mathbf{\uparrow 33.3\%}$} 
            & 1 & 2 & 5 & 17 & \textcolor{blue}{\textbf{60.57}} & \textcolor{blue}{$\mathbf{\uparrow 32.0\%}$} \\
        \hline
        \rowcolor{lightyellow} FedHPro (Ours) 
            & 5 & 6 & 9 & 14 & \textcolor{DeepRed}{\textbf{84.80}} & \textcolor{DeepRed}{$\mathbf{\uparrow 48.1\%}$} 
            & 1 & 2 & 4 & 12 & \textcolor{DeepRed}{\textbf{64.52}} & \textcolor{DeepRed}{$\mathbf{\uparrow 52.0\%}$} \\
        \hline\thickhline
        \end{tabular}
    }
\end{table*}

\paragraph{Comparison Results under Quantity Skew.} 
The comparison results for FedHPro and compared FL baselines with different imbalanced levels $\rho$ are presented in \cref{tab:app2}. 
We observe that: 1) FedHPro achieves significantly better results on imbalance settings with long-tail distribution, which confirms the benefit of our hyper-prototypes to regularize local training, even in the scenario of the global-level class imbalance; 2) FedHPro alleviates the negative impact of the long-tailed distribution by leveraging the semantic guidance of hyper-prototypes learned from the real samples.

\paragraph{Comparison Results under Domain Skew.} 
The comparison results of Digits and Office-Caltech datasets with domain skew are shown in \cref{tab:app3}, where different clients are equipped with the data samples from different domains: Digits (MNIST: 3, USPS: 7, SVHN: 6, SYN: 4) and Office-Caltech (Caltech: 3, Webcam: 1, Amazon: 2, DSLR: 4). 
Each client's dataset is randomly selected from the total samples of the corresponding domain, as $1\%$ for Digits and $20\%$ for Office-Caltech. The experiments show that FedHPro performs significantly better than other FL baselines across domains, which confirms that resulting hyper-prototypes acquire well-generalizable ability and thus effectively boost performance on different domains. For example, FedHPro outperforms the best counterpart (FedSA) with a gap of $2.26\%$ on the Digits dataset and $3.95\%$ on the Office-Caltech dataset under the domain-skewed FL scenario.

\begin{table}[t]
\caption{Ablation study of the number of \textbf{Clients $K$} and \textbf{Local Epochs $E$}. 
\textbf{Left}: Clients $K$ on TinyImageNet with the label skew $\mathtt{NID1}_{0.5}$. For the experiments with $K=50$ and $K=100$ clients, we randomly select $20\%$ of the total clients to participate during the FL training. \textbf{Right}: Local Epochs $E$ on the TinyImageNet with the label skew $\mathtt{NID1}_{0.5}$.}
\label{tab:app56}
\centering
    \resizebox{\linewidth}{!}{
    \setlength\tabcolsep{7.5pt}
    \renewcommand\arraystretch{1.2}
    \setlength{\arrayrulewidth}{0.3mm}
        \begin{tabular}{l||ccccc I ccccc}
        \hline\thickhline
        \rowcolor{mygray} 
        \textbf{\textsc{Methods}} 
        & $K=10$ & $K=20$ & $K=30$ & $K=50$ & $K=100$ 
        & $E=1$ & $E=5$ & $E=10$ & $E=15$ & $E=20$ \\
        \hline\hline
        FedAvg~\pub{AISTATS'17} 
            & 43.88 & 41.31 & 39.06 & 36.21 & 33.79 
            & 36.92 & 42.96 & 43.88 & 42.79 & 40.38 \\
        $\text{FedProto}^{\bm{\dagger}}$~\pub{AAAI'22} 
            & 44.19 & 41.75 & 39.43 & 36.50 & 33.34 
            & 36.57 & 42.71 & 44.19 & 43.16 & 40.55 \\
        $\text{FedTGP}^{\bm{\dagger}}$~\pub{AAAI'24} 
            & 45.13 & 42.53 & 40.18 & 37.28 & 34.74 
            & 37.40 & 43.63 & 45.13 & 43.91 & 41.24 \\
        FedGMKD~\pub{NeurIPS'24}  
            & 45.30 & 42.68 & 40.61 & 37.72 & 34.95 
            & 37.65 & 43.86 & 45.30 & 44.03 & 41.69 \\
        FedRCL~\pub{CVPR'24}  
            & 45.24 & \textcolor{blue}{\textbf{43.70}} & 41.02 & \textcolor{blue}{\textbf{39.08}} & \textcolor{blue}{\textbf{36.12}} 
            & 37.80 & 44.16 & 45.24 & \textcolor{blue}{\textbf{44.60}} & \textcolor{blue}{\textbf{42.67}} \\
        $\text{FedSA}^{\bm{\dagger}}$~\pub{AAAI'25} 
            & \textcolor{blue}{\textbf{45.56}} & 43.55 & \textcolor{blue}{\textbf{41.15}} & 38.84 & 35.39 
            & \textcolor{blue}{\textbf{38.13}} & \textcolor{blue}{\textbf{44.29}} & \textcolor{blue}{\textbf{45.63}} & 44.52 & 42.48 \\
        \hline
        \rowcolor{lightyellow} FedHPro (Ours) 
            & \textcolor{DeepRed}{\textbf{46.90}} & \textcolor{DeepRed}{\textbf{44.95}} & \textcolor{DeepRed}{\textbf{42.63}} & \textcolor{DeepRed}{\textbf{41.70}} & \textcolor{DeepRed}{\textbf{38.45}} 
            & \textcolor{DeepRed}{\textbf{40.18}} & \textcolor{DeepRed}{\textbf{45.56}} & \textcolor{DeepRed}{\textbf{46.90}} & \textcolor{DeepRed}{\textbf{45.83}} & \textcolor{DeepRed}{\textbf{44.23}} \\
        \hline\thickhline
        \end{tabular}
    }
\end{table}

\begin{table}[t!]
\caption{Ablation study for the effect of \textbf{Feature Dimension $d$} on the Digits (\textbf{Left}) and Office-Caltech (\textbf{Right}).}
\label{tab:app7}
\centering
    \resizebox{\linewidth}{!}{
    \setlength\tabcolsep{6.8pt}
    \renewcommand\arraystretch{1.2}
    \setlength{\arrayrulewidth}{0.3mm}
        \begin{tabular}{l||cccc|c|cccc|c}
        \hline\thickhline
        \rowcolor{mygray} 
        & \multicolumn{5}{c I}{\textsc{\textbf{Digits}}} & \multicolumn{5}{c}{\textsc{\textbf{Office-Caltech}}} \\
        \cline{2-6} \cline{7-11} 
        \rowcolor{mygray} 
        \multirow{-2}{*}{\centering \textsc{\textbf{Methods}}} 
        & $d=64$ & $d=256$ & $d=512$ & $d=1024$ & AVG $\pmb{\uparrow}$ 
        & $d=64$ & $d=256$ & $d=512$ & $d=1024$ & AVG $\pmb{\uparrow}$ \\
        \hline\hline
        $\text{FedProto}^{\bm{\dagger}}$~\pub{AAAI'22} 
            & 75.96 & 78.43 & 80.41 & 79.46 & 78.56 
            & 54.71 & 56.34 & 58.26 & 58.05 & 56.84 \\
        $\text{FedSA}^{\bm{\dagger}}$~\pub{AAAI'25} 
            & 77.81 & 81.65 & 82.54 & 81.63 & 80.91 
            & 57.39 & 59.54 & 60.57 & 59.12 & 59.16 \\
        \hline
        \rowcolor{lightyellow} 
        FedHPro (Ours) 
            & 80.28 & 83.37 & 84.80 & 83.15 & \textcolor{DeepRed}{\textbf{82.90}} 
            & 61.87 & 62.47 & 64.52 & 63.10 & \textcolor{DeepRed}{\textbf{62.99}} \\
        \hline\thickhline
        \end{tabular}
    }
\end{table}

\paragraph{Ablation Study of Clients $K$ and Local Epochs $E$.} 
The comparison results of the number of clients $K$ on TinyImageNet under the label skew $\mathtt{NID1}_{0.5}$ are shown in \cref{tab:app56}-\textit{Left}. We can observe that FedHPro brings significant performance gains even under the more challenging settings with more clients. These experiments demonstrate the potential of our FedHPro to be applied to real-world scenarios with massive numbers of clients and random client participation. Moreover, these results further show that the improvements from FedHPro are robust to such uncertainties. Note that for the $K=50$ and $K=100$ clients, we randomly select $20\%$ of the total clients to participate in the FL training.

Then, we elaborate on the number of local epochs per communication round. We set the number of local epochs $E$ to be in the set $\{1,5,10,15,20\}$. The comparison results of local epochs $E$ on TinyImageNet under the label skew $\mathtt{NID1}_{0.5}$ are shown in \cref{tab:app56}-\textit{Right}, in which one observes that with increasing $E$, FedAvg performance first increases and then decreases. This is because when the $E$ is too small, the local training cannot converge properly in each communication round. On the other hand, when the $E$ is too large, the model parameters of local clients might be driven too far from the global optimum. Nevertheless, FedHPro yields consistent improvements over the baselines across different choices of $E$.

\paragraph{Effect of Feature Dimension $d$.} 
We further investigate the feature dimension $d$ in FedHPro to evaluate its impact on model performance, as shown in \cref{tab:app7}. We can observe that most prototype-based FL methods show better performance with increasing feature dimensions $d$ from $d=64$ to $d=512$, but the model performance degrades with an excessively large feature dimension, \eg, $d=1024$. 
It becomes more challenging to train the classifiers with too large a feature dimension. 
From the \cref{tab:app7}, our FedHPro achieves competitive performance across different choices of the dimension $d$, even in $d=64$, while FedProto lags by $4.32\%$ on Digits and $7.16\%$ on Office-Caltech.

\paragraph{Prototype-based Protocols.} 
FedProto \cite{tan2022fedproto}, FedTGP \cite{zhang2024fedtgp}, and FedSA \cite{zhou2025fedsa} are popular prototype-based FL methods. However, as shown in \cref{tab:pro_only}, implementing FedProto, FedTGP, and FedSA following their original prototype-only communication setting consistently leads to poor performance. 
To ensure a fair comparison with mainstream FL baselines that exchange model parameters (\eg, FedRCL), in our experiments, we report \textit{enhanced variants} that transmit prototypes together with model parameters (denoted as $\text{FedProto}^{\bm{\dagger}}$, $\text{FedTGP}^{\bm{\dagger}}$, and $\text{FedSA}^{\bm{\dagger}}$ in our tables). 
We emphasize that these enhanced variants $\bm{\dagger}$ are not the original algorithms' intended communication protocol: \textit{they are upper-bound implementations used solely to decouple effectiveness from communication constraints and to ensure that accuracy comparisons are not dominated by a weaker transmission budget}.

\begin{table*}[h!]
\caption{Results on the Digits and Office-Caltech under two \textbf{Prototype-based Protocols}: prototype-only and joint prototype-and-model parameters (denoted by $\bm{\dagger}$). 
Best results are in \textcolor{DeepRed}{\textbf{red bold}}, with second best in \textcolor{blue}{\textbf{blue bold}}.}
\label{tab:pro_only}
\centering
    \resizebox{\linewidth}{!}{
    \setlength\tabcolsep{6.66pt}
    \renewcommand\arraystretch{1.22}
    \setlength{\arrayrulewidth}{0.3mm}
        \begin{tabular}{l||cccc|cc|cccc|cc}
        \hline\thickhline
        \rowcolor{mygray} 
        & \multicolumn{6}{c I}{\textsc{\textbf{Digits}}} & \multicolumn{6}{c}{\textsc{\textbf{Office-Caltech}}}  \\
        \cline{2-7} \cline{8-13} 
        \rowcolor{mygray} 
        \multirow{-2}{*}{\centering \textsc{\textbf{Methods}}} 
        & MNIST & USPS & SVHN & SYN & AVG $\pmb{\uparrow}$ & $\triangle$ 
        & Caltech & Webcam & Amazon & DSLR & AVG $\pmb{\uparrow}$ & $\triangle$ \\
        \hline\hline
        FedAvg~\pub{AISTATS'17} 
            & 96.68 & 90.43 & 76.35 & 51.85 & 78.82 & -- 
            & 60.71 & 48.90 & 75.79 & 36.28 & 55.42 & -- \\
        \hline
        FedProto~\pub{AAAI'22} 
            & 93.39 & 87.53 & 70.17 & 45.28 & 74.09 & -4.73 
            & 57.31 & 46.71 & 72.63 & 28.89 & 51.38 & -4.04 \\
        FedTGP~\pub{AAAI'24} 
            & 93.72 & 88.07 & 70.56 & 46.25 & 74.65 & -4.17 
            & 59.09 & 46.74 & 73.33 & 30.67 & 52.46 & -2.96 \\
        FedSA~\pub{AAAI'25} 
            & 94.19 & 87.39 & 71.30 & 47.14 & 75.01 & -3.81 
            & 59.87 & 46.70 & 73.96 & 33.27 & 53.45 & -1.97 \\
        \hline
        $\text{FedProto}^{\bm{\dagger}}$~\pub{AAAI'22} 
            & 97.40 & 91.89 & 79.22 & 53.15 & 80.41 & +1.59 
            & 61.58 & 53.21 & 78.72 & 39.53 & 58.26 & +2.84 \\
        $\text{FedTGP}^{\bm{\dagger}}$~\pub{AAAI'24} 
            & 97.73 & 92.43 & 81.35 & 55.19 & 81.67 & +2.85 
            & 61.25 & 54.97 & 78.79 & 42.57 & 59.39 & +3.97 \\
        $\text{FedSA}^{\bm{\dagger}}$~\pub{AAAI'25} 
            & 97.78 & 92.50 & 81.77 & 58.12 & \textcolor{blue}{\textbf{82.54}} & \textcolor{blue}{\textbf{+3.72}} 
            & 62.23 & 55.85 & 79.13 & 45.05 & \textcolor{blue}{\textbf{60.57}} & \textcolor{blue}{\textbf{+5.15}} \\
        \hline
        \rowcolor{lightyellow} FedHPro (Ours) 
            & 98.52 & 93.13 & 84.95 & 62.59 & \textcolor{DeepRed}{\textbf{84.80}} & \textcolor{DeepRed}{\textbf{+5.98}} 
            & 64.61 & 62.45 & 80.69 & 50.33 & \textcolor{DeepRed}{\textbf{64.52}} & \textcolor{DeepRed}{\textbf{+9.10}} \\
        \hline\thickhline
        \end{tabular}
    }
\end{table*}

\paragraph{Fairness across Clients.} 
Following the fairness metrics in \cite{li2019fair}, we verify the advantage of FedHPro in fairness and report the results on Digits and Office-Caltech. We list the average, the worst $10\%$, and the best $10\%$ of test accuracy over all clients across three runs, as shown in \cref{tab:app8}. 
We also report the variance of the accuracy distribution across clients (the smaller, the fairer) \cite{li2019fair}. 
The results demonstrate that FedHPro is easier to obtain a fair solution than FedAvg, because the information conveyed by our hyper-prototypes is more independent of the local data distribution compared to the naive FedAvg scheme.

\begin{table}[h!]
\caption{\textbf{Fairness Evaluation} for the average, the worst $10\%$, the best $10\%$, and the variance of the test accuracy on the Digits (\textbf{Left}) across $20$ clients, and the Office-Caltech (\textbf{Right}) across $10$ clients.}
\label{tab:app8}
\centering
    \resizebox{\linewidth}{!}{
    \setlength\tabcolsep{9.6pt}
    \renewcommand\arraystretch{1.2}
    \setlength{\arrayrulewidth}{0.3mm}
        \begin{tabular}{l||ccc | c |ccc | c}
        \hline\thickhline
        \rowcolor{mygray} 
        & \multicolumn{4}{c I}{\textsc{\textbf{Digits}}} & \multicolumn{4}{c}{\textsc{\textbf{Office-Caltech}}} \\
        \cline{2-5} \cline{6-9} 
        \rowcolor{mygray} 
        \multirow{-2}{*}{\centering \textsc{\textbf{Methods}}} 
        & Worst $10\%$ & Best $10\%$ & Average & Variance $\pmb{\downarrow}$ 
        & Worst $10\%$ & Best $10\%$ & Average & Variance $\pmb{\downarrow}$ \\
        \hline\hline
        FedAvg~\pub{AISTATS'17} 
            & 36.85 & 73.09 & 56.39 & 12.81 
            & 38.18 & 55.90 & 49.76 & 8.92 \\
        \hline
        \rowcolor{lightyellow} FedHPro (Ours) 
            & 43.54 & 78.59 & 62.57 & \textcolor{DeepRed}{\textbf{8.33}} 
            & 44.74 & 63.25 & 54.92 & \textcolor{DeepRed}{\textbf{5.07}} \\
        \hline\thickhline
        \end{tabular}}
\end{table}

\begin{table}[t]
\caption{Results of \textbf{Model Heterogeneity} on the CIFAR100 dataset with label skew $\mathtt{NID1}_{0.5}$ using heterogeneous feature extractors ($\texttt{HtFE}_\texttt{X}$: $\texttt{X}$ means the number of different feature extractors across clients). Best results are in \textcolor{DeepRed}{\textbf{red bold}}, with second best in \textcolor{blue}{\textbf{blue bold}}.}
\label{tab:app_model_h}
\centering
    \resizebox{0.65\linewidth}{!}{
    \setlength\tabcolsep{7.66pt}
    \renewcommand\arraystretch{1.26}
    \setlength{\arrayrulewidth}{0.3mm}
        \begin{tabular}{l||cccc}
        \hline\thickhline
        \rowcolor{mygray} 
        & \multicolumn{4}{c}{\textsc{\textbf{Heterogeneous Feature Extractors}} \texttt{HtFE}} \\
        \cline{2-5} 
        \rowcolor{mygray} 
        \multirow{-2}{*}{\centering \textsc{\textbf{Methods}}} 
        & $\bm{\texttt{HtFE}_\texttt{2}}$ & $\bm{\texttt{HtFE}_\texttt{3}}$ & $\bm{\texttt{HtFE}_\texttt{5}}$ & $\bm{\texttt{HtFE}_\texttt{8}}$ \\
        \hline\hline
        $\text{FedProto}^{\bm{\dagger}}$~\pub{AAAI'22} 
            & 40.28 $\pm$ 0.18 & 35.54 $\pm$ 0.41 & 33.40 $\pm$ 0.20 & 25.96 $\pm$ 0.57 \\
        $\text{FedTGP}^{\bm{\dagger}}$~\pub{AAAI'24} 
            & 46.54 $\pm$ 0.11 & 42.36 $\pm$ 0.21 & 42.08 $\pm$ 0.24 & 35.59 $\pm$ 0.41 \\
        $\text{FedSA}^{\bm{\dagger}}$~\pub{AAAI'25} 
            & \textcolor{blue}{\textbf{46.91 $\pm$ 0.31}} 
            & \textcolor{blue}{\textbf{45.26 $\pm$ 0.33}} 
            & \textcolor{blue}{\textbf{44.85 $\pm$ 0.19}} 
            & \textcolor{blue}{\textbf{37.06 $\pm$ 0.22}} \\
        \hline
        \rowcolor{lightyellow} FedHPro (Ours) 
            & \textcolor{DeepRed}{\textbf{48.71 $\pm$ 0.09}} 
            & \textcolor{DeepRed}{\textbf{47.85 $\pm$ 0.54}} 
            & \textcolor{DeepRed}{\textbf{47.32 $\pm$ 0.27}} 
            & \textcolor{DeepRed}{\textbf{42.63 $\pm$ 0.35}} \\
        \hline\thickhline
        \end{tabular}}
\end{table}

\paragraph{Impact of Model Heterogeneity.} 
Following \cite{zhang2024fedtgp,zhou2025fedsa}, we simulate the model heterogeneity among clients by assigning model backbones. Specifically, we denote this setting as $\texttt{HtFE}_\texttt{X}$, where the $\texttt{FE}_\texttt{X}$ represents the number $\texttt{X}$ of different feature extractors ($f$ in \cref{sec:background}) used in our experiments. 
Each client $k$ is assigned a corresponding feature extractor (using $k$ modulo $\texttt{X}$). 
For example, the $\texttt{HtFE}_\texttt{8}$ setting includes eight different architectures: $4$-layer CNN, GoogleNet \cite{goonet2015}, MobileNetV2 \cite{sandler2018mobilenetv2}, ResNet-10, ResNet-18, ResNet-34, ResNet-50, and ResNet-101 \cite{he2016deep}. 
To generate the feature representations with an identical feature dimension ($d=512$ by default), we add an adaptive average pooling layer after each feature extractor. 
In addition, in our implementation, we only aggregate the same classifier globally across clients, while each client retains its own feature extractor locally, and each client's classifier $h$ is defined as a fully connected layer, \ie, \texttt{nn.Linear(512, num\_classes)}. 
In this experimental scenario, $\bm{\dagger}$ represents transmit prototypes together with the classifier $h$'s parameters: $\text{FedProto}^{\bm{\dagger}}$, $\text{FedTGP}^{\bm{\dagger}}$, and $\text{FedSA}^{\bm{\dagger}}$. 
For the clients $K$ and local epochs $E$ under model heterogeneity, we set $K=10$ and $E=10$.

As shown in \cref{tab:app_model_h}, we report the performance of FedHPro and other baselines on four types of model heterogeneity: 1) $\texttt{HtFE}_\texttt{2}$: $4$-layer CNN and ResNet-18; 
2) $\texttt{HtFE}_\texttt{3}$: ResNet-10, ResNet-18, and ResNet-34; 
3) $\texttt{HtFE}_\texttt{5}$: GoogleNet, ResNet-10, ResNet-18, ResNet-34, and ResNet-50; 4) $\texttt{HtFE}_\texttt{8}$: $4$-layer CNN, GoogleNet, MobileNetV2, ResNet-10, ResNet-18, ResNet-34, ResNet-50, and ResNet-101.

We observe that the performance of the prototype-based FL methods significantly decreases as the number of feature extractors $\texttt{X}$ increases. This suggests that prototype-based FL methods heavily rely on the quality of data representations. 
Both averaging local prototypes (\eg, FedProto) and refining global anchors (\eg, FedSA) lead to semantic drift, resulting in catastrophic performance degradation. 
In contrast, FedHPro parameterizes semantic anchors as a set of learnable hyper-prototypes, optimized via gradient matching to align with class-relevant characteristics distilled from clients' real samples, thereby effectively mitigating this semantic drift in heterogeneous FL. 
Even under the challenging $\texttt{HtFE}_\texttt{8}$ setting, FedHPro outperforms FedProto by $16.67\%$, FedTGP by $7.04\%$, and FedSA by $5.57\%$, indicating that the proposed FedHPro is more robust and less affected by the model heterogeneity in FL.

\begin{table*}[t!]
\caption{\textbf{Left}: Comparison results of FedHPro and baselines on the AG News \cite{zhang2015character} dataset. \textbf{Right}: Comparison results of FedHPro and baselines on the SUN397 \cite{xiao2010sun} dataset.}
\label{tab:tab_text}
\centering
\begin{minipage}{0.493\linewidth}
\centering
\begingroup
\setlength{\tabcolsep}{5pt} 
\renewcommand{\arraystretch}{1.26} 
\setlength{\arrayrulewidth}{0.3mm} 
\resizebox{\linewidth}{!}{%
    \begin{tabular}{l I ccccc}
    \hline\thickhline
    \rowcolor{mygray} 
    & \multicolumn{5}{c}{\textsc{\textbf{AG News}}} \\
    \cline{2-6} 
    \rowcolor{mygray} 
    \multirow{-2}{*}{\centering \textsc{\textbf{Settings}}} 
    & FedAvg & FedProx & FedSA & $\text{FedSA}^{\bm{\dagger}}$ & FedHPro\\
    \hline\hline
    AG-NID$_{K10}$ & 82.09 & 80.43 & 75.13 & \textcolor{blue}{\textbf{84.03}} & \cellcolor{lightyellow} \textcolor{DeepRed}{\textbf{86.78}} \\
    AG-NID$_{K50}$ & 52.71 & 55.14 & 47.53 & \textcolor{blue}{\textbf{56.92}} & \cellcolor{lightyellow} \textcolor{DeepRed}{\textbf{63.06}} \\
    \hline\thickhline
    \end{tabular}
}
\endgroup
\end{minipage}
\hfill
\begin{minipage}{0.493\linewidth}
\centering
\begingroup
\setlength{\tabcolsep}{5pt} 
\renewcommand{\arraystretch}{1.26} 
\setlength{\arrayrulewidth}{0.3mm} 
\resizebox{\linewidth}{!}{%
    \begin{tabular}{l I ccccc}
    \hline\thickhline
    \rowcolor{mygray} 
    & \multicolumn{5}{c}{\textsc{\textbf{SUN397}}} \\
    \cline{2-6} 
    \rowcolor{mygray} 
    \multirow{-2}{*}{\centering \textsc{\textbf{Settings}}} 
    & FedAvg & FedRCL & FedSA & $\text{FedSA}^{\bm{\dagger}}$ & FedHPro\\
    \hline\hline
    $\mathtt{NID1}_{0.2}$ & 68.92 & 70.23 & 61.58 & \textcolor{blue}{\textbf{70.86}} & \cellcolor{lightyellow} \textcolor{DeepRed}{\textbf{73.41}} \\
    $\mathtt{NID1}_{0.5}$ & 70.61 & \textcolor{blue}{\textbf{73.26}} & 65.30 & 72.18 & \cellcolor{lightyellow} \textcolor{DeepRed}{\textbf{75.22}} \\
    \hline\thickhline
    \end{tabular}
}
\endgroup
\end{minipage}
\end{table*}

\paragraph{Applicability to Text-modality.}
To verify that our proposed FedHPro can also be applied to the text modality, we explore experiments on a text classification dataset, AG News \cite{zhang2015character}. Specifically, we employ TextCNN \cite{zhang2017sensitivity} with a hidden dimension of $32$ for AG News, and train the model via Adam optimizer with a learning rate of $0.01$. Referring to \cite{wang2024personalized}, we consider two non-iid scenarios on the AG News: 1) AG-NID$_{K10}$: we partition the dataset to $10$ clients, where $50\%$ clients have samples from $2$ classes and the other $50\%$ uniform clients have samples from $4$ classes; 2) AG-NID$_{K50}$: we partition the dataset to $50$ clients, where $80\%$ clients have samples from $2$ classes and the other $20\%$ uniform clients have samples from $4$ classes. 
As shown in \cref{tab:tab_text} (\textbf{Left}), these results demonstrate the effectiveness of the proposed FedHPro in text-modality scenarios, suggesting that its benefits generalize beyond image-modality tasks.

\paragraph{Extreme Scenarios with Larger Classes.}
Based on \cref{eq:eq9}, with fixed $|\mathcal{I}|$, the gradient-matching cost of FedHPro is approximately linear in the number $\mathbb{C}$ of classes. 
Empirically, FedHPro remains strong on the CIFAR100 ($\mathbb{C}=100$), TinyImageNet ($\mathbb{C}=200$), and their long-tailed variants. 
In \cref{tab:app56}, FedHPro also performs well at clients $K=100$. 
This matches \cref{alg:ours}: in FedHPro, after aggregating client outputs, server-side hyper-prototype updates are independent of the number $K$ of clients. 
As shown in \cref{tab:tab_text} (\textbf{Right}), we further evaluate FedHPro on the SUN397 \cite{xiao2010sun} dataset, which involves a larger number of classes $\mathbb{C}=397$. 
We also set the number of clients as $K=100$ with an active ratio of $20\%$. 
These results support the applicability of FedHPro to larger-scale settings.

\end{document}